  \providecommand\BibTeX{{%
    \normalfont B\kern-0.5em{\scshape i\kern-0.25em b}\kern-0.8em\TeX}}}
\patchcmd{\maketitle}{\@copyrightpermission}{
   \begin{minipage}{0.3\columnwidth}
     \href{https://creativecommons.org/licenses/by/4.0/}{\includegraphics[width=0.90\textwidth]{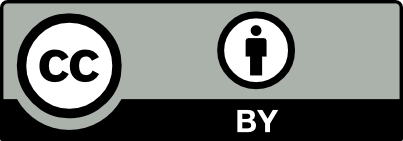}}
   \end{minipage}\hfill
   \begin{minipage}{0.7\columnwidth}
     \href{https://creativecommons.org/licenses/by/4.0/}{This work is licensed under a Creative Commons Attribution International 4.0 License.}
   \end{minipage}
 
   \vspace{5pt}
}{}{}
\DeclareMathOperator*{\argmax}{arg\,max}
\DeclareMathOperator*{\argmin}{arg\,min}
\newcommand{\probP}{\text{I\kern-0.15em P}}
\newtheorem{prop}{Proposition}
\theoremstyle{plain}
\newtheorem{theorem}{Theorem}[section]
\theoremstyle{definition}
\newtheorem{definition}[theorem]{Definition}
\theoremstyle{remark}
\algnewcommand\RETURN{\algorithmicreturn}
\begin{document}
\newcommand{\rongzhi}[1]{\textcolor{blue}{[Rongzhi: #1]}}
\newcommand{\xc}[1]{\textcolor{cyan}{[Jiaming: #1]}}
\newcommand{\ours}{\textsc{LocalBoost}}
\synctex=1
\settopmatter{printacmref=true}
\newcommand{\zc}[1]{\textcolor{blue}{[#1 - Chao]}}
\newcommand{\yy}[1]{\textcolor{magenta}{[#1 - Yue]}}
\newcommand{\jmshen}[1]{\textcolor{red}{[Jiaming: #1]}}

\title{Local Boosting for Weakly-Supervised Learning}

\author{Rongzhi Zhang}
\affiliation{%
  \institution{Georgia Institute of Technology}
  \city{Atlanta}
  \state{GA}
  \country{USA}}
\email{rongzhi.zhang@gatech.edu}

\author{Yue Yu}
\affiliation{%
  \institution{Georgia Institute of Technology}
  \city{Atlanta}
  \state{GA}
  \country{USA}}
\email{yueyu@gatech.edu}

\author{Jiaming Shen}
\affiliation{%
  \institution{Google Research}
  \city{New York City}
  \state{NY}
  \country{USA}}
\email{jmshen@google.com}

\author{Xiquan Cui}
\affiliation{%
  \institution{The Home Depot}
  \city{Atlanta}
  \state{GA}
  \country{USA}}
\email{xiquan_cui@homedepot.com}

\author{Chao Zhang}
\affiliation{%
  \institution{Georgia Institute of Technology}
  \city{Atlanta}
  \state{GA}
  \country{USA}}
\email{chaozhang@gatech.edu}
\newcommand{\yue}[1]{{\text{\textcolor{red}{[Yue: #1]}}}}
\renewcommand{\shortauthors}{Rongzhi Zhang et al.}

\begin{abstract}
Boosting is a commonly used technique to enhance the performance of a set of base models by combining them into a strong ensemble model. Though widely adopted, boosting is typically used in supervised learning where the data is labeled accurately. However, in weakly supervised learning, where most of the data is labeled through weak and noisy sources, it remains nontrivial to design effective boosting approaches. 
  In this work, we show that the standard implementation of the convex combination of base learners can hardly work due to the presence of noisy labels. 
  Instead, we propose \ours, a novel framework for weakly-supervised boosting. 
\ours~iteratively boosts the ensemble model from two dimensions, \emph{i.e., intra-source and inter-source}. 
The intra-source boosting introduces locality to the base learners and enables each base learner to focus on a particular feature regime by training new base learners on granularity-varying error regions. 
For the inter-source boosting, we leverage a conditional function to indicate the weak source where the sample is more likely to appear. 
To account for the weak labels, we further design an estimate-then-modify approach to compute the model weights. 
Experiments on seven datasets show that our method significantly outperforms vanilla boosting methods and other weakly-supervised methods.

\end{abstract}




\begin{CCSXML}
<ccs2012>
   <concept>
       <concept_id>10010147.10010257.10010321.10010333.10010076</concept_id>
       <concept_desc>Computing methodologies~Boosting</concept_desc>
       <concept_significance>500</concept_significance>
       </concept>
 </ccs2012>
\end{CCSXML}

\ccsdesc[500]{Computing methodologies~Boosting}


\keywords{Boosting Methods, Weak Supervision, Classification}



\maketitle

\section{Introduction}

\emph{Weakly-supervised learning} (WSL) has gained significant attention as a solution to the challenge of label scarcity in machine learning. WSL leverages weak supervision signals, such as labeling functions or other models, to generate a large amount of weakly labeled data, which is easier to obtain than complete annotations. Despite achieving promising results in various tasks including text classification \cite{awasthi2020learning}, sequence tagging \cite{safranchik2020weakly}, and e-commerce~\cite{zhang2022adaptive}, an empirical study ~\citep{zhang2021wrench} reveals that even state-of-the-art WSL methods still underperform fully-supervised methods by significant margins, where the average performance discrepancy is 18.84\%, measured by accuracy or F1 score.

\begin{figure}
  \vspace{-7.5pt}
  \centering
  \includegraphics[scale=0.5]{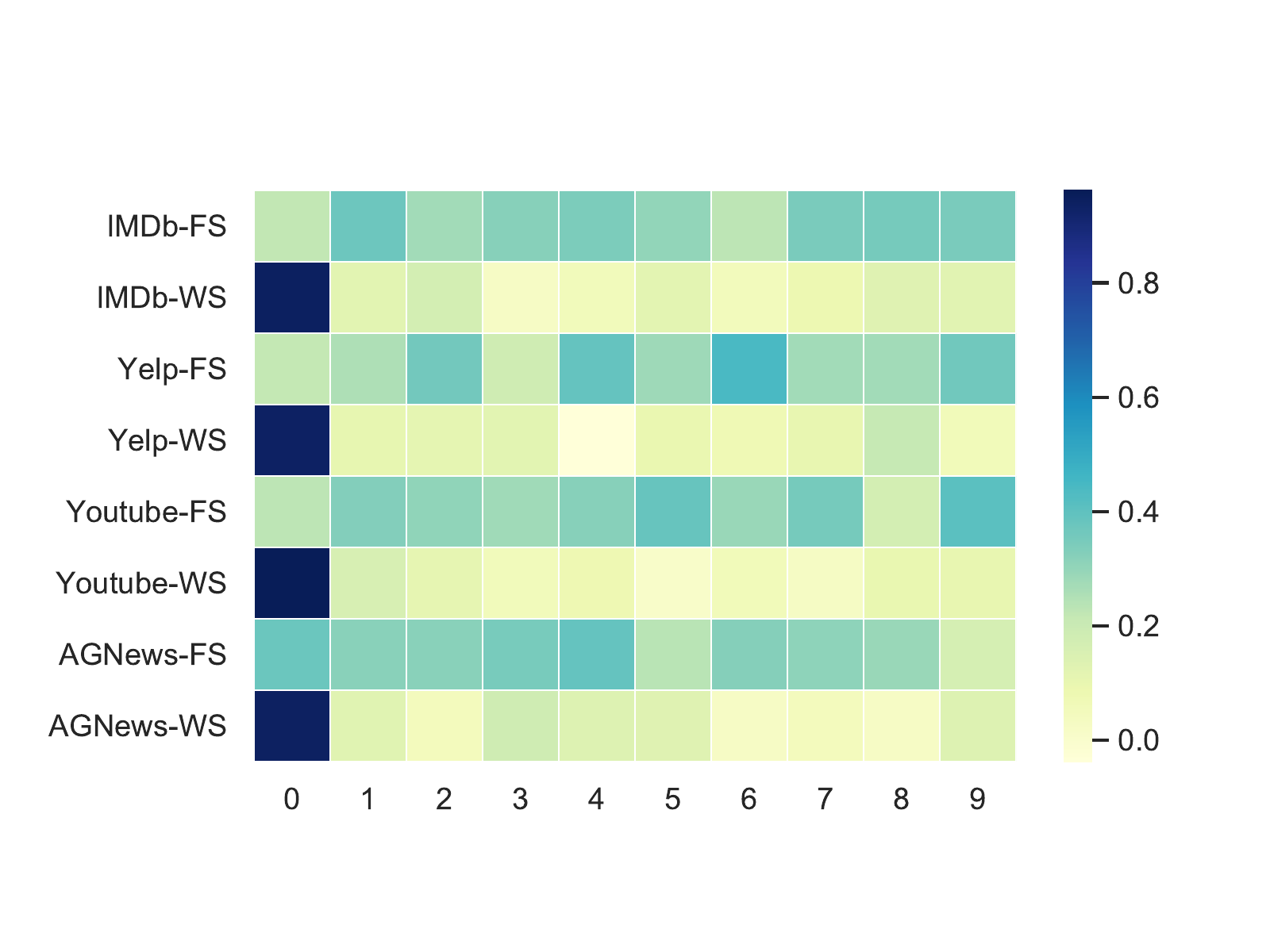}
  \vspace{-7.5ex}
  \caption{Heatmap of base learner weights in model ensembles. Suffix ``-FS'' indicates fully-supervised settings using clean labels and suffix ``-WS'' indicates weakly-supervised settings. Each model ensemble consists of 10 base learners, and their weights are shown by color in the heatmap.}
  \label{fig:dominate}
\end{figure}



On the other hand, boosting algorithm is one of the most commonly used approaches to enhance the performance of machine learning models by combining multiple base models~\cite{freund1995desicion, freund1996experiments, mason1999boosting, chen2016xgboost, ratsch2005efficient}.
For example, AdaBoost~\cite{freund1995desicion} dynamically adjusts the importance weight of each training example to learn multiple base models and uses a weighted combination to aggregate these base models' predictions. 
XGBoost~\cite{chen2016xgboost} iteratively computes the gradients and hessians defined on a clean training set to fit base learners and combines their predictions via weighted summation. 
Despite the encouring performance, these boosting algorithms usually assume the availability of a clean labeled dataset.  
In WSL, however, 
the imperfect supervision signals interfere with the training data importance reweighting which further prevents us from computing an accurate weight of each base learner.
If we naively apply these supervised boosting methods using a weakly labeled dataset, we observe a phenomenon called ``\emph{weight domination}'' where the assigned weight of the initial base model is too large and dominates the ensemble model prediction, as shown in Figure~\ref{fig:dominate}.


A key challenge of adapting boosting methods to the WSL setting is to accurately compute the importance of each example in the weakly labeled training data for each base learner.
Previously, when a clean dataset is provided, the goal of data importance reweighing process is to prioritize instances with large errors for subsequent base learner training.
This effectively localizes the base learner to the error region in the label space. 
However, in WSL, the noisy labels hinder the accurate identification of error instances and thus we need to shift our focus from the label space to the training data space. 
One potential approach is to partition the weakly-labeled training data into subsets and constructs a mixture of expert models (MoE)~\cite{jacobs1991adaptive} where each expert is localized for one training data subset.
Along this line, \citet{tsai2021mice} propose partitioning the unlabeled dataset into latent semantics subsets and using multiple expert models to discriminate instances.
However, this approach assumes the input data naturally reside in a homogeneous feature space and requires a hyper-parameter search to appropriately localize the expert models. Additionally, the off-the-shelf clusters do not adapt during the learning process, which conflicts with the philosophy of boosting methods.



We investigate the problem of boosting in the context of weakly supervised learning, where most of the data are labeled by weak sources and only a limited number of data points have accurate labels. To address the difficulties posed by this setting, we introduce \ours, a novel iterative and adaptive framework for WSL boosting. \ours\ retains the essential concepts of the traditional boosting approach while incorporating adaptations specifically designed for the WSL scenario, described as follows:

\begin{itemize}[leftmargin=0.5cm]
  \item \emph{Base Learner Locality}.
Motivated by the challenges posed by the data reweighting approach in AdaBoost for WSL and the limitations of hard clustering in MoE methods, we propose a new approach to base learner localization. In AdaBoost, large-error instances are assigned with larger weights for model training, however, this approach does not account for the fact that error instances exist in  multiple feature regimes that are difficult to capture with weak labels. 
Additionally, the rigid clusters and fixed expert models in MoE cannot adapt in the iterative learning process, hindering the framework's ability to dynamically target weak feature regimes and build upon preceding models. 
To address these issues, our proposed framework \ours\ assigns base learners to adaptively updated local regions in the embedding space, thereby introducing locality to the base learners.


  \item \emph{Two-dimension Boosting}.
  Effective aggregation of localized base learners in WSL goes beyond the simple convex combination in supervised settings (as shown in Sec.~\ref{sec:cond_func}). 
  To account for potential label noises from weak sources, we aim to learn multiple complementary base learners in \ours.
  To fulfill this goal, we introduce a weighting function to compute the conditional probability of weak sources that are more likely to annotate a given data instance.
  We further design a two-dimensional boosting framework in \ours, where inter-source boosting and intra-source boosting are performed alternately. 
  The former improves the base learners within a given weak source, while the latter complements the base learners with additional models trained from other weak sources.

  \item \emph{Interactions between Weak and Clean Labels}.
  We incorporate the interactions between weak and clean labels into \ours~framework in two steps: 
  (1) We compute a mapping between the small clean dataset and the large weakly labeled dataset to localize base learners in the data embedding space. We first identify the errors made by the current model ensemble, and then sample corresponding clusters from the large weak dataset to form the training set for the next base learner. 
  (2) We propose a novel estimate-then-modify approach for computing base learner weights. Initially, the weights are estimated on the large weakly labeled dataset.
  Then, we refine these estimates by generating multiple perturbations of the model weights and selecting the one that results in the lowest error rate on the small clean dataset as the modified weights.
  
\end{itemize}

We evaluate \ours~on seven datasets including sentiment analysis, topic classification and relation classification from WRENCH \cite{zhang2021wrench}, the standard benchmark for weakly supervised learning. The results indicate that \ours~achieves superior performance compared with other state-of-the-art methods. 
Moreover, our analysis further confirms the effectiveness of boosting in two dimensions and incorporating interactions between weak and clean labels.
We summarize our key contributions as follows:
\begin{enumerate}[leftmargin=0.5cm]

\item
We present \ours\footnote{The implementation is available at \href{https://github.com/rz-zhang/local\_boost}{https://github.com/rz-zhang/local\_boost}}, a novel weakly-supervised boosting framework that implements progressive inter-source and intra-source boosting.

\item
We incorporate explicit locality into the base learners of the boosting framework, allowing them to specialize in finer-grained data regions and perform well in specific feature regimes.

\item We leverage the interactions between weak and clean labels for effective base learner localization and weight estimation.

  \item We conduct extensive experiments on seven benchmark datasets and demonstrate the superiority of \ours\ over WSL and ensemble baselines.

\end{enumerate}





\section{Related Work}
\noindent\textbf{Weakly Supervised Learning.}~
Weakly supervised learning (WSL) focuses on training machine learning models with a variety of weaker, often programmatic supervision sources~\cite{zhang2021wrench}.
Specifically, in WSL, users provide \emph{weak supervision sources}, e.g., heuristics rules~\cite{zhuang2022resel,xu2023weakly}, knowledge bases~\cite{hoffmann2011knowledge}, and pre-trained models~\cite{smith2022language,yu2023zero,zhang2022prompt}, in the form of \emph{labeling functions (LFs)}, which provide labels for some subset of the data to create the training set.

The main challenge in WSL is incorrect or conflicting labels produced by labeling functions~\cite{zhang2022survey}. To address this, recent research has explored two solutions. The first is to create label models, which aggregate the noisy votes of labeling functions using different dependency assumptions. This approach has been used in various studies, including \cite{ratner2017snorkel,ratner2019training,fu2020fast,zhang2022leveraging,stephan-etal-2022-sepll}. The second solution involves designing end models that use noise-robust learning techniques to prevent overfitting to label noise, as seen in studies by \cite{yu2021fine,wu2020biased,li2021improved}. This current work belongs to the first category.

For label models, several recent works attempt to better aggregate the noisy labeling functions via better modeling the distribution between LF and ground truth labels~\cite{ratner2019training,fu2020fast}, \citet{zhang2022leveraging} incorporate the instance features with probabilistic models, while \cite{ruhling2021end,ren2020denoising,sam2022losses} learn the label aggregation using supervision from the target task.
Besides, \cite{safranchik2020weakly,shin2022universalizing, wu2022towards} adapt the WSL to a broader range of applications such as regression and structural prediction, and \cite{zhang2020seqmix,biegel2021active,yu2022actune,yu2022cold} design active learning approaches to strategically solicit human feedbacks to refine weak supervision models.

So far, only a few works have attempted to integrate the boosting or ensemble methods with WSL.
\citet{guan2018said} learn an individual weight for each labeler and aggregate the prediction results for prediction.
\citet{zhang2022prompt} leverage boosting-style ensemble
strategy to identify difficult instances and
adaptively solicit new rules from human annotators and \citet{zhang2022adaptive} leverage rules from multiple views to further enhance the performance.
Very recently, \citet{zhao2022admoe} use mixture-of-experts (MoE) to route noisy labeling functions to different experts for achieving specialized and scalable learning. However, these works directly {combine} existing ensembling techniques with noisy labels. Instead, we first identify the key drawback of adopting boosting for WSL (Fig.~\ref{fig:dominate}) and then design two-dimension boosting to resolve this issue, which provides a more effective and flexible way for learning with multi-source weak labels.\\

\noindent\textbf{Boosting}.
The boosting methods are prevailing to improve machine learning models via the combination of base learners. 
Research in this area begins in the last century~\cite{freund1995desicion} and numerous variants have been proposed \cite{breiman1999prediction, mason1999boosting, friedman2001greedy, chen2016xgboost}. AdaBoost~\cite{freund1995desicion} incorporates base learners to the model ensemble via iterative data reweighting and model weights computation that minimizes the total error on the training set, to improve the model ensemble for binary classification. XGBoost~\cite{chen2016xgboost} presents an advanced implementation of boosting by using a more regularized model formalization to control over-fitting, and it has the virtue of being accurate and fast. However, these methods are discussed in the context of fully-supervised learning, and their principal derivation or implementation relies on clean data. This results in issues with its adaption to weakly-supervised settings as it is challenging to obtain reliable computation from weakly labeled data.

Recent works explore multi-class boosting ~\cite{brukhim2021multiclass, cortes2021boosting} and the application side of boosting \cite{zhang2022building, huang2018learning, nitanda2018functional, suggala2020generalized}. \citet{brukhim2021multiclass} study the resources required for boosting and present how the learning cost of boosting depends on the number of classes.
\citet{zhang2022building} explore boosting in the context of adversarial robustness and propose a robust ensemble approach via margin boosting.
For the application with deep neural networks, \citet{taherkhani2020adaboost} integrate AdaBoost with a convolutional neural network to deal with the data imbalance.
Among these works, the most related one to ours is MultiBoost~\cite{cortes2021boosting}, where the authors study boosting in the presence of multiple source domains. They put forward a Q-ensemble to compute the conditional probability of different domains given the input data. This formulation can be traced back to multi-source learning in fully supervised settings \cite{mansour2008domain, hoffman2018algorithms, cortes2021discriminative}. In this work, we draw inspiration from such formulation and present an adaptation to the WSL setting -- we design a weighting function to compute the conditional probability of weak sources. In this way, we modulate the base learners to highly relevant weak sources and present a two-dimension boosting accordingly.


\section{Preliminaries}
Let $\mathcal{X}$ denote the input space and $\mathcal{Y}=\{1, \cdots, C\}$ represent the output space, where $C$ is the number of classes. We have a large weakly labeled dataset $\mathcal{D}_l$ and a small clean dataset $\mathcal{D}_c$. The weak labels of $\mathcal{D}_l$ are generated by a set of weak sources $\mathcal{R}$, and the number of weak sources $|\mathcal{R}| = p$.
\begin{definition}[Weak sources]
    In the context of WSL, the weak sources refer to labeling functions (LFs), which are constructed via keywords or semantic rules.
    In this work, we will use the term ``weak source'' and ``labeling function'' interchangeably. 
    Given an unlabeled data sample $\boldsymbol{x}_u$, a weak source $r(\cdot)$ maps it into the label space: $r(\boldsymbol{x}_u) \rightarrow y \in \mathcal{Y} \cup\{0\}$. Here
    $\mathcal{Y}$ is the original label set for the task and $\{0\}$ is a special label indicating $\boldsymbol{x}_u$ is unmatchable by $r(\cdot)$. Given a set of $N$ samples and $p$ labeling functions, we can obtain a LF matching matrix $A_{N \times p}$, where each entry $a_{i,j}\in \{0, 1\}$ denotes whether the $i$-th sample is matched by the $j$-th LF.
\end{definition}

\begin{definition}[Base learner]
    Let $S=\left\{\left(\mathbf{x}_i, y_i\right)\right\}_{i=1}^n$ be $n$ i.i.d samples drawn from $\probP$, and $\probP_n$ be the empirical distribution of $\mathcal{S}$. We let $h:\mathcal{X} \rightarrow \mathcal{Y}$ denote a base learner that predicts the label of $\mathbf{x}$ as $h(\mathbf{x})$. For a base learner $h$ and a distribution $\mathcal{S}$, we denote the expected loss of $h$ as $\mathcal{L}(\mathcal{S}, h)=\mathbb{E}_{(x, y) \sim \mathcal{S}}[\ell(h(x), y)]$, where $\ell$ is cross entropy loss.
\end{definition}

\begin{definition}[Ensemble]
    Let $\mathcal{H}$ be the set of base learners, and   $h:\mathcal{X} \rightarrow \mathcal{Y}, \forall h \in \mathcal{H}$. In ensembling, we place a probability distribution $\probP_q$ over $\mathcal{H}$ to assign weights of base learners. It leads to a score-based base learner $h_{\probP_q}: \mathcal{X} \rightarrow \mathbb{R}^C$ with $\left[h_{\probP_q}(\mathbf{x})\right]_j=\mathbb{E}_{h \sim {\probP_q}}[\mathbb{I}(h(\mathbf{x})=j)]$, and we convert it into a base learner in the form of a standard classifier with the argmax function: $f(\mathbf{x})=\operatorname{argmax}_{j \in \mathcal{Y}}\left[h_{\probP_q}(\mathbf{x})\right]_j$. Given the score-based $\mathcal{H}_f=\left\{f_1, f_2, \ldots\right\}$, where $f: \mathcal{X} \rightarrow \mathbb{R}^C, \forall f \in \mathcal{H}_f$, we can have a convex combination of the base learners via a set of real-valued weights $W$ over $\mathcal{H}_f$, and define the weighted ensemble $F$ as $F_j(\mathbf{x})=\sum_{f \in \mathcal{H}_f} W(f)f_j(\mathbf{x})$, where $W(f) \in \mathbb{R}$ is the weight of the base learner $f$.
\end{definition}

\paragraph{Problem Formulation.}
Given a large weakly labeled dataset $\mathcal{D}_l$, a small clean dataset $\mathcal{D}_c$,
and $p$ weak sources, we aim to iteratively obtain base learners $f_j$ to boost the performance of the ensemble model $F_j(\mathbf{x})=\sum_{f \in \mathcal{H}_f} W(f)f_j(\mathbf{x})$.
\section{Methodology}
\subsection{Learning Procedure Overview}
\ours\ iteratively implements inter-source boosting and intra-source boosting in the WSL setting. For weak source $l: 1,\cdots,p$ and iteration $t:1,\cdots,T$, we have a base learner $f_{t,l}(x)$ with a model weight $\alpha_{t,l}$. As we will illustrate in Sec.~\ref{sec:cond_func}, the straightforward convex combination may not work for the WSL setting. To this end, we introduce a conditional probability function $Q(l|x)$ to account for the fact that the weak labels are generated from different weak sources. For a given sample $\mathbf{x}$, the base learners from the weak source where $\mathbf{x}$ is more likely to be labeled are allocated higher weights in the voted combination. 

Given a large weakly labeled dataset $\mathcal{D}_l$ and a small clean
dataset $\mathcal{D}_c$, the overall learning procedure runs as follows: In iteration $(t,l)$, the preceding ensemble model $F_{t-1, l}(\mathbf{x})$ or $F_{t, l-1}(\mathbf{x})$ accumulated errors on $\mathcal{D}_c$, say the large-error instances are $s_1,\cdots,s_k$. Then we sample $k$ clusters $S_1,\cdots, S_k$ in $\mathcal{D}_l$ near $s_1,\cdots,s_k$, these clusters form a training set $\mathcal{S}_{t,l} = \{S_1, \cdots, S_k\}$ for the base learner $f_{t,l}(\mathbf{x})$ and we fit $f_{t,l}(\mathbf{x})$ on $\mathcal{S}_{t,l}$. Next, we estimate the model weights $\alpha_{t,l}$ on $\mathcal{D}_l$ and modify it on $\mathcal{D}_c$. After $T$ iterations, we obtain the final ensemble:
\begin{equation}
    F(x) = \sum_{t=1}^T \sum_{l=1}^p \alpha_{t, l} \mathrm{Q}(l \mid x) f_{t, l}(x).
\end{equation}
The key components presented in the above model ensemble, including the model weights $\alpha_{t,l}$, the conditional function $Q(l|x)$, and the local base learner $f_{t,l}(x)$, are specially designed for the WSL setting. 
Different from the formulation of Adaboost~\cite{freund1995desicion} and Q-ensemble~\cite{cortes2021boosting, cortes2021discriminative} that are solely based on the clean labels, \ours\ relieves such reliance on fully accurate supervision via the interaction between weak labels and clean labels, and localizes the base learners to adaptively complement the preceding ensemble. 

\begin{algorithm}[!t]
\caption{Pseudo-code of \ours\ Framework}
\label{tab:algo}
\begin{algorithmic}
\Require \text{Large weakly-labeled dataset} $\mathcal{D}_l$, \text{Small clean dataset} $\mathcal{D}_c$,
$p$ \text{weak sources, Source-index dataset} $D_s$
\State \Comment{\textbf{Initialization}}
\State \text{Learn} $Q(l|x)$ \text{on} $\mathcal{D}_s$ \text{({Sec.~\ref{sec:cond_func}})}
\State \text{Fit} $f_{1,1}$ \text{on} $\mathcal{D}_l$,
$\alpha_{1,1} \gets 1$
\State $F_{1,1}(x) \gets \alpha_{1,1}Q(l|x)f_{1,1}(x)$
\For{$t=1$ $\gets$ $T$} \quad\quad  \textit{$\triangleright$ Intra-source boosting}
    \For{$l=1$ $\gets$ $p$} \quad\quad \textit{$\triangleright$ Inter-source boosting}
    \State  $F_{t,l}(x)$ \text{inference on} $\mathcal{D}_c$
    \State \Comment{\textbf{Localize the base learner}} \text{({Sec.~\ref{sec:base-learner}})}
    \State  \text{Identify the large-error instances} $s_1, \cdots, s_k$ \text{on} $\mathcal{D}_c$
    \State  \text{Sample} $k$ \text{clusters} $S_1,\cdots,S_k$ on $\mathcal{D}_l$ \text{based on} $s_1, \cdots, s_k$
    \State Fit $f_{t,l}(x)$ on $\mathcal{D}_{t,l} = \{S_1\cup \cdots\cup S_k\}$
    \State \Comment{\textbf{Weights estimate and correction}} \text{{Sec.~\ref{sec:weight}})}
    \State \text{Estimate} $\alpha_{t, l}$ \text{on} $\mathcal{D}_l$
    \State \text{Generate a group of perturbed} $\mathbf{v}_{t, l} = [\alpha_{1,1},\cdots,\alpha_{t,l}]$ \State \text{Normalize each of them} 
    \State \text{Select} $\argmin_{ \mathbf{v}_{t, l} \in \mathcal{V}_{t, l}}\operatorname{err}_{\mathcal{D}_c}$ as the corrected weight
    \State \Comment{\textbf{Update the ensemble}}
    \State $ F_{t,l}(x) = \sum_{t} \sum_{l} \alpha_{t, l} \mathrm{Q}(l \mid x) f_{t, l}(x)$
    \EndFor
\EndFor\\
\Return \text{Ensemble model} $F(x)$
\end{algorithmic}
\end{algorithm}

The learning algorithm is presented in Algorithm~\ref{tab:algo}. 
In the initialization stage, we prepare a source-index dataset $\mathcal{D}_s$ using the LF matching matrix (Sec.~\ref{sec:cond_func}) to learn the conditional function, and fit the first base learner on the large weakly labeled dataset $\mathcal{D}_l$.
Then we iteratively perform the two-dimension boosting. The inner loop over $p$ weak sources corresponds to the inter-source boosting, while the outside loop over $T$ iterations corresponds to the intra-source boosting. In each iteration $(t, l)$, we start from the ensemble inference on $\mathcal{D}_c$ to identify the accumulated large-error instances $s_1,\cdots,s_k$. These error instances based on the ground-truth labels can accurately reflect the feature regimes where the current ensemble performs poorly. 

Fig.~\ref{fig:local} presents an illustrative visualization of the base learner localization. To localize the subsequent base learners, we sample $k$ clusters from the large weakly labeled dataset $\mathcal{D}_l$ based on the identified large-error instances $s_1,\cdots,s_k$. 
In other words, these instances from $\mathcal{D}_c$ guide the localization of the base learner via a mapping between the instances on the small clean dataset and the regions on the large weakly labeled dataset. To complement the previous ensemble, we fit the new base learner $f_{t,l}(x)$ on the dataset $\mathcal{D}_{t,l}$ consisting of instances from clusters $S_1,\cdots S_k$. 

For the model weights computation, we propose an estimate-then-modify paradigm (as shown in Fig.~\ref{fig:weight_esco}) to leverage both the weak labels and clean labels. On the large weak dataset $\mathcal{D}_l$, we retain the AdaBoost principles to compute the weighted error and yield an estimate of $\alpha_{t,l}$. Considering the labels are obtained from weak sources,  the estimated $\alpha_{t,l}$ on $\mathcal{D}_l$ can hardly guarantee the boosting progress, so we further modify it on the small clean validation dataset $\mathcal{D}_c$. Specifically, we generate a group of the \emph{perturbed} weight vectors $\mathcal{V}_p = \{\mathbf{v}_{t, l}\}^{n_p}$ and compute the weighted error on $\mathcal{D}_c$. Among the perturbations, we select the one that achieves the lowest error as the modified model weights. 

In the following sections, we will first introduce the conditional function designed for the weak sources (Section~\ref{sec:cond_func}), then illustrate how we introduce locality to base learners (Section~\ref{sec:base-learner}), and finally discuss the estimate-then-modify paradigm for the computation of the model weights in WSL settings (Section~\ref{sec:weight}).

\subsection{Conditional Function for Weak Sources Localization}\label{sec:cond_func}
In the WSL setting, the weak labels are generated by weak sources such as labeling functions. We first pinpoint that the standard convex combination of the base learners can lead to a poor ensemble without taking the weak sources into account. 
Being aware of this, we propose to use a conditional function to account for the weak sources. Given an input instance, the conditional function represents the probability of the instance being labeled by each weak source. In this way, the ensemble model is modulated by this conditional function, while being weighted by a series of base learner weights, to deal with the weak sources in the WSL setting. 
\begin{prop}\label{prop:paradox}
    There exist weak sources $LF_1$ and $LF_2$ with corresponding distributions $\mathcal{D}_1$ and $\mathcal{D}_2$, and base learners $f_1$ and $f_2$ with $\mathcal{L}\left(\mathcal{D}_1, f_1\right)=0$ and $\mathcal{L}\left(\mathcal{D}_2, f_2\right)=0$ such that 
    \begin{equation}
        \forall \alpha \in[0,1],\quad
        \mathcal{L}\left(\frac{1}{2}\left(\mathcal{D}_1+\mathcal{D}_2\right), \alpha f_1+(1-\alpha) f_2\right) \geq \frac{1}{2},
    \end{equation}
    where $\alpha$ is the model weights in the ensemble, $\mathcal{L}()$ quantifies the loss.
\end{prop}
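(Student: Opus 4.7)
The plan is to prove this existence claim by exhibiting an explicit counterexample in a simple binary classification setting ($C=2$). I would let $\mathcal{D}_1$ and $\mathcal{D}_2$ be point-mass distributions supported on distinct feature vectors carrying opposite labels: $\mathcal{D}_1$ places all its mass on $(x_1, 1)$ while $\mathcal{D}_2$ places all its mass on $(x_2, 2)$. The corresponding weak sources $LF_1$ and $LF_2$ can be defined so that each fires only on its own point and returns the associated label. For the base learners, I would take the constant score functions $f_1(x) \equiv (1, 0)$ and $f_2(x) \equiv (0, 1)$ for every $x$, which are admissible score-based classifiers corresponding to degenerate distributions $\mathbb{P}_q$ over $\mathcal{H}$ concentrated on classes $1$ and $2$ respectively.

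With this construction, verifying the zero-loss conditions is immediate: $\mathcal{L}(\mathcal{D}_1, f_1) = -\log 1 = 0$, and symmetrically $\mathcal{L}(\mathcal{D}_2, f_2) = 0$. The crux of the argument is that the convex combination $\alpha f_1 + (1-\alpha) f_2$ collapses to the constant score vector $(\alpha,\, 1-\alpha)$ at every input. Because both base learners are input-independent, so is any convex mixture of them; hence no choice of $\alpha$ can simultaneously favor class $1$ on $x_1$ and class $2$ on $x_2$, which is precisely what the mixture distribution demands.

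The concluding step is a direct cross-entropy calculation on the balanced mixture:
\begin{equation}
\mathcal{L}\bigl(\tfrac{1}{2}(\mathcal{D}_1 + \mathcal{D}_2),\, \alpha f_1 + (1-\alpha) f_2\bigr) \;=\; -\tfrac{1}{2}\log\alpha - \tfrac{1}{2}\log(1-\alpha) \;=\; -\tfrac{1}{2}\log\bigl[\alpha(1-\alpha)\bigr].
\end{equation}
Since $\alpha(1-\alpha) \le 1/4$ on $[0,1]$ with equality at $\alpha = 1/2$, this quantity attains its minimum of $\log 2$ at $\alpha = 1/2$ and diverges to $+\infty$ at the endpoints. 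Because $\log 2 > 1/2$, the required lower bound holds uniformly in $\alpha \in [0,1]$.

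I do not anticipate a substantial obstacle here, since the statement is an existence claim and the counterexample is deliberately extremal. The only minor points to confirm are that the degenerate one-hot score vectors are admissible under the excerpt's definition of a score-based base learner, and that the boundary cases $\alpha \in \{0,1\}$ are handled by the standard convention that $-\log 0 = +\infty$, so the bound is in fact vacuous there rather than violated. If desired, the same construction trivially extends to any $C \ge 2$ by using two distinct one-hot score vectors.
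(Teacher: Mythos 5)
Your proof is correct, and it rests on the same counterexample skeleton as the paper's: two point-mass sources on $x_1$ and $x_2$ with opposite labels, and two \emph{constant} base learners that each perfectly fit their own source, so that any convex combination is itself constant and cannot be right on both points. Where you diverge is in the loss used for the verification. The paper works in a $\{+1,-1\}$ label space and evaluates the combination with a margin-based $0$--$1$ indicator loss, getting $\frac{1}{2}\mathbb{I}(2\alpha-1\leq 0)+\frac{1}{2}\mathbb{I}(1-2\alpha\leq 0)\geq\frac{1}{2}$, with the bound tight whenever $\alpha\neq\frac{1}{2}$. You instead keep the cross-entropy loss that the paper's own Definition of a base learner stipulates for $\mathcal{L}$, obtaining $-\frac{1}{2}\log\bigl[\alpha(1-\alpha)\bigr]\geq\log 2>\frac{1}{2}$. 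Your version is arguably more faithful to the paper's formal setup (the paper's proof quietly switches loss functions relative to its own definition of $\mathcal{L}$), at the cost of the $\frac{1}{2}$ threshold no longer being tight; the paper's version makes the constant $\frac{1}{2}$ appear naturally as the misclassification rate of a constant predictor on a balanced two-point mixture. Your handling of the endpoints $\alpha\in\{0,1\}$ via $-\log 0=+\infty$ and the remark that the one-hot score vectors are admissible degenerate ensembles are both fine.
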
 The proof is given in Appendix~\ref{app:proof}.

 The above proposition states that even if the base learner fits well to the weakly labeled data provided by the weak sources, the convex combination of them can still perform poorly. 
 Therefore, we consider the new ensemble form to account for the presence of weak sources. Inspired by \cite{mansour2008domain, hoffman2018algorithms, zhang2021multiple, cortes2021boosting, cortes2021discriminative}, we introduce a conditional function to compute the probability of the sample being labeled by each weak source. The major difference is that the mentioned works discuss the conditional probability in the context of domain adaption or multi-source learning and are designed for the fully supervised settings, we instead present a conditional function to account for the weak sources in the WSL settings to modulate the base learners, and naturally introduce the inter-source boosting. 

\begin{prop}
    By plugging a conditional function regarding the weak sources into the standard convex combination, there exists an ensemble in the WSL setting such that 
    \begin{equation}
        \mathcal{L}\left(\mathcal{D}_\lambda,\left(\alpha \mathbf{Q}(l \mid x) f_1(x)+(1-\alpha) \mathbf{Q}(l \mid x) f_2(x)\right)\right) = 0,
    \end{equation}
    where $\mathcal{D}_\lambda$ is a mixture of the weak sources such that $\mathcal{D}_\lambda=\sum_{k=1}^p \lambda_k \mathcal{D}_k $,
    $\sum_{k=1}^p \lambda_k=1$, and $\forall k\in[p], \lambda_k > 0$.
\end{prop}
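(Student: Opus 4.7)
The plan is to invoke the construction underlying Proposition~\ref{prop:paradox} and show that once the conditional gate $Q(l\mid x)$ is inserted, the two previously conflicting base learners become complementary and the ensemble attains zero loss. Concretely, the counterexample in Proposition~\ref{prop:paradox} can be realized with $\mathcal{D}_1$ and $\mathcal{D}_2$ having \emph{disjoint} supports (the two labeling functions fire on disjoint subsets of $\mathcal{X}$), with $f_1$ perfectly fitting $\mathcal{D}_1$ and $f_2$ perfectly fitting $\mathcal{D}_2$. I would reuse exactly that construction as the witness ensemble rather than inventing a new one, so that the proposition reads as a direct fix to the failure mode of Proposition~\ref{prop:paradox}.

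First, I would define the conditional function through the LF matching matrix: for each source $l$, set $Q(l\mid x)=\mathbb{I}[x\in\mathrm{supp}(\mathcal{D}_l)]$. On $\mathrm{supp}(\mathcal{D}_1)\cup\mathrm{supp}(\mathcal{D}_2)$ this is a valid conditional distribution because the supports are disjoint, so $\sum_l Q(l\mid x)=1$. I would then read the ensemble in the statement as $\alpha\,Q(1\mid x)\,f_1(x)+(1-\alpha)\,Q(2\mid x)\,f_2(x)$, aligning it with the general source-indexed form $\sum_l \alpha_l Q(l\mid x) f_l(x)$ used in Section~\ref{sec:cond_func} and Algorithm~\ref{tab:algo}. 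Evaluating pointwise: for $x\in\mathrm{supp}(\mathcal{D}_1)$ the second gate vanishes and the ensemble collapses to $\alpha f_1(x)$, whose argmax agrees with that of $f_1(x)$ (positive scaling preserves the argmax of a nonnegative score vector) and is therefore the correct label because $\mathcal{L}(\mathcal{D}_1,f_1)=0$; a symmetric argument handles $x\in\mathrm{supp}(\mathcal{D}_2)$. Hence the ensemble makes no error on $\mathrm{supp}(\mathcal{D}_1)\cup\mathrm{supp}(\mathcal{D}_2)$, and in particular on every mixture $\mathcal{D}_\lambda=\sum_k \lambda_k \mathcal{D}_k$ supported in that union. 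The extension to general $p$ is immediate by adjoining gated experts $\alpha_l Q(l\mid x) f_l(x)$ for the remaining sources with mutually disjoint supports.

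The main obstacle I anticipate is bookkeeping around the loss. The statement uses the generic symbol $\mathcal{L}$, and if it is read as cross entropy then scaling the score vector by $\alpha\in(0,1)$ can alter the loss value even when the argmax is preserved. I would resolve this by appealing to the convention from the ensemble definition, where scores are converted into a hard classifier $f(x)=\operatorname{argmax}_j[h_{\probP_q}(x)]_j$ prior to evaluation, so $\mathcal{L}$ here should be read as expected misclassification error; a single clarifying sentence suffices to align the notation. Beyond that caveat, the argument is nothing more than the observation that $Q(l\mid x)$ acts as a perfect router over disjoint weak-source supports, which is precisely the structural correction that the plain convex combination in Proposition~\ref{prop:paradox} lacks.
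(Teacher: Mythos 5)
Your proposal is correct and follows essentially the same route as the paper: it reuses the two-point construction from Proposition~\ref{prop:paradox}, sets $Q(l\mid x)$ to the $0/1$ indicator of LF matching so that each sample is routed entirely to the base learner that fits its own source, observes that the gated combination then classifies both points correctly, and extends to a general mixture $\mathcal{D}_\lambda$. Your added remarks on disjoint supports and on reading $\mathcal{L}$ as misclassification error (so that positive scaling by $\alpha$ is harmless) only make explicit what the paper's proof leaves implicit.
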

\begin{proof}
    Consider the case of conditional function indicating the matching between the samples and the weak sources:
    \begin{equation}
        Q(LF_1|x_1) = Q(LF_2|x_2) = 1,\quad
        Q(LF_2|x_1) = Q(LF_1|x_2) = 0.
    \end{equation}
    Plug it into the standard convex combination, we get the ensemble as
    \begin{equation}
        F(x) = \sum_i\alpha Q(l|x)f_i(x).
    \end{equation}
    Then the ensemble admits no loss for the case mentioned in Prop.~\ref{prop:paradox}:
    \begin{equation}
    \begin{aligned}
        \forall \alpha \in[0,1], \quad
        &\mathcal{L}\left(\frac{1}{2}\left(\mathcal{D}_1+\mathcal{D}_2\right), F(x)\right)
        \\
        =& \frac{1}{2} \mathbb{I}(\alpha Q(1\mid x_1)f_1(x_1)+
        (1-\alpha)Q(2\mid x_1)f_2(x_1)) \\
        +& \frac{1}{2} \mathbb{I}(\alpha Q(1\mid x_2)f_1(x_2)+(1-\alpha)Q(2\mid x_2)f_2(x_2)) = 0.
    \end{aligned}
    \end{equation}
    This can be extended to a general mixture of weak sources as 
    \begin{equation}
        \begin{aligned}
            \mathcal{L}\left(\mathcal{D}_\lambda, F(x)\right) & =\left(\lambda \mathbb{I}\left(F\left(x_1\right) \leq 0\right)+(1-\lambda) \mathbb{I}\left(-F\left(x_2\right) \leq 0\right)\right) \\
            & =\lambda(\mathbb{I}(\alpha \leq 0)+(1-\lambda) \mathbb{I}((1-\alpha) \leq 0))=0.
        \end{aligned}
    \end{equation}
\end{proof}
In practice, we use an MLP to learn the conditional function on a source-index dataset $\mathcal{D}_s$ using the instance features. Given $p$ weak sources and $N_l$ unlabeled data, we can easily construct a matching matrix in the shape of $N_l \times p$ to represent the  matching results, where each entry takes a binary value from $\{0, 1\}$ to indicate if it gets matched with a specific weak source. For the $p$ weak sources, we have $\sum_{l=1}^pQ(l|x) = 1$.

In the inference stage, the advantage of the learned conditional function is more evident compared to the direct LF matching. As a straightforward substitution of the learned conditional function, we can implement an LF matching for each test sample to modulate the base learners to specific weak sources. The major problem with such a hard matching is the potential labeling conflict among multiple LFs, {as it is often quite challenging to identify the correct labeling functions based on the voting or aggregation~\cite{ratner2017snorkel,ratner2019training} approaches with
matching matrix only}.
For the learned conditional function, it can generalize better than the hard matching after training on the source-index dataset. By the output of a probability vector, it assigns different weights to the base learners according to their source relevance and thus enables better modulating.
 
\subsection{Base Learner Localization}
\label{sec:base-learner}
\begin{figure}
    \centering
    \includegraphics[scale=0.52]{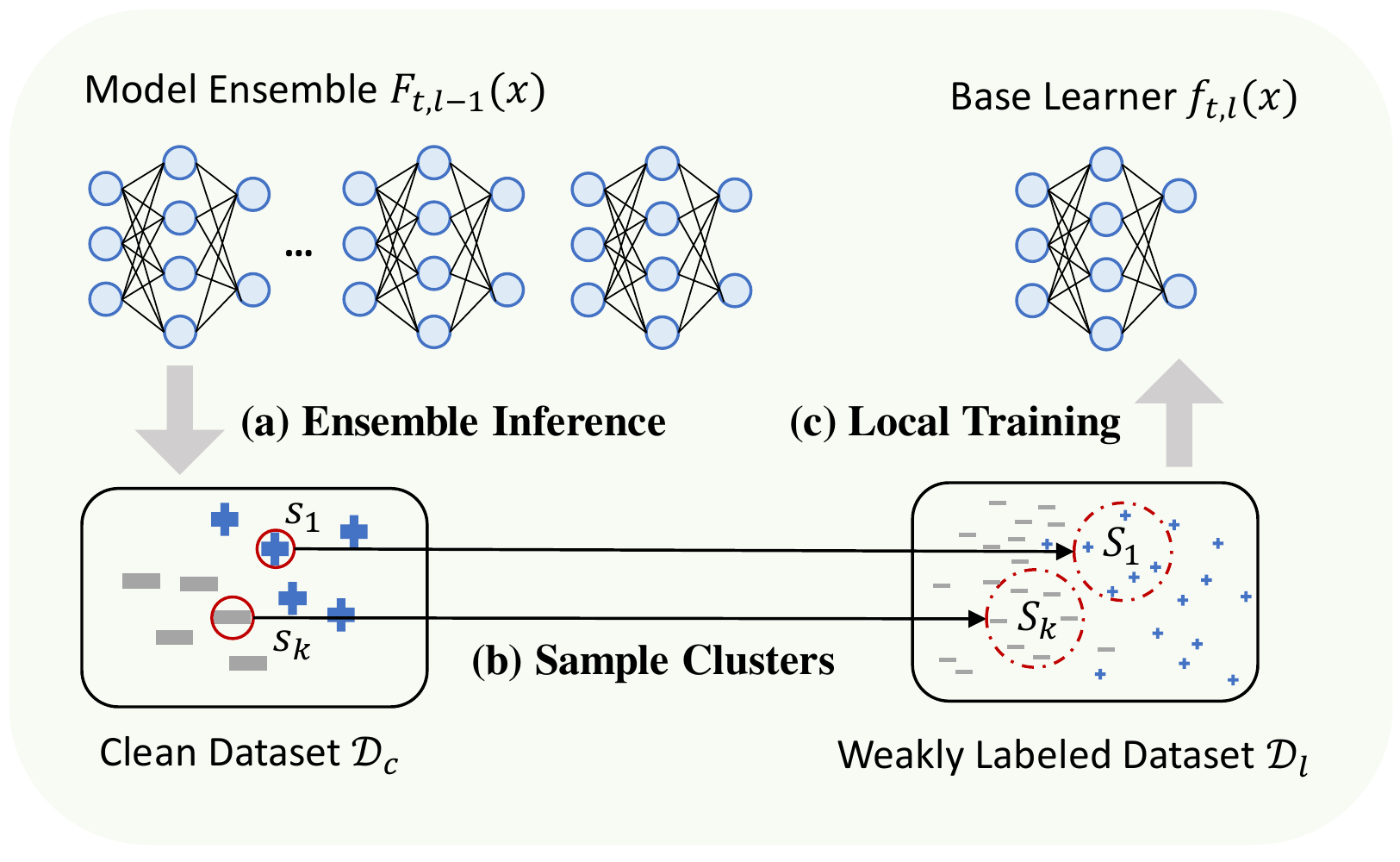}
    \caption{An illustrative example of base learner localization on a 2D plane. To localize the base learner $f_{t,l}(x)$, we first implement an ensemble inference on the clean dataset $\mathcal{D}_c$ to identify $k$ large error instances $s_1,\cdots,s_k$. Next, we sample $k$ clusters $S_1,\cdots,S_k$ on the weakly labeled dataset $\mathcal{D}_l$. Then the base learner $f_{t,l}(x)$ is trained on the local regions consisting of $S_1,\cdots,S_k$. Here we emphasize the clean dataset $\mathcal{D}_c$ is only for validation use. It guides the base learner localization but is not involved in the training.
    This figure takes $F_{t, l-1}(x)$ as the preceding ensemble, it could also be $F_{t-1, l}(x)$ when the loop over $p$ weak sources is completed. }
    \label{fig:local}
\end{figure}

The core idea of \ours\ is to allocate the base learners to local regions in the input embedding space. To illustrate this point clearly, we present a 2D visualization in Fig.~\ref{fig:local}. 
Such a locality differs from both the focus on large-error data in supervised boosting which does not explicitly account for information from feature space and the task-agnostic pre-clustering in the MoE approach. 
For \emph{traditional boosting approaches} such as AdaBoost, the learning framework iteratively computes the weighted error and each base learner is trained on a reweighted training set. 
The issues are two folds for the WSL setting: 1) First, there are not enough clean data and the noisy labels in the weakly labeled dataset cannot underpin an accurate estimation of the weighted error. 2) Second, the base learner is granted access to the entire training set. 
Although the training set has been reweighted to emphasize the accumulated errors, the 
imperfect weak labels from multiple feature regimes poses specific challenges for base learner fitting, as the boosted model is still easy to overfit the noise. 
For the \emph{MoE approach}, although it constructs clusters in the embedding space and deploys expert models to learn specialized patterns, the clusters are often assigned in a static way with instance features~\cite{yu2022coco}. 
Such a rigid scheme fails to explicitly model the training dynamics of the existing base learners, often resulting in suboptimal performance. 

To this end, we harness the small, clean validation set to guide the localization of the base learners to introduce locality for these base learners via a mapping between the error instance and the error regions. 
We first identify the large-error instances on the small clean dataset $\mathcal{D}_c$, then sample clusters on $\mathcal{D}_l$ based on the identified instances.
Denote the model ensemble at iteration $(t, l)$ as $F_{t,l}(x)$, and its prediction vector as $\mathcal{G}(F_{t,l}(x))$. We maintain an error matrix $m_{t,l}$ to record the accumulated error, which is initialized as $m_{t,l} \gets [0]^{N_c}$ at the beginning, where $N_c = |\mathcal{D}_c|$. By the ensemble inference on $\mathcal{D}_c$, we update $m_{t,l}$ as
\begin{equation}
    m_{t,l} \gets m_{t,l} + [1]^{N_c} - \mathbf{y}^T \mathcal{G}(F_{t,l}(x)),
\end{equation}
so the entries with accumulated errors get larger in the iterative process. Then we pick the top-$k$ error instances by
\begin{equation}
    \{s_{j_1}, \cdots, s_{j_k} | \argmax_{j_1,\cdots,j_k \in N_c}\sum_j^{j_1,\cdots,j_k} [m_{t,l}]_j\}.
    \label{eq:k}
\end{equation}
Based on these identified error instances, we back to $\mathcal{D}_l$ and sample $k$ clusters with a hyper-radius inversely proportional to the accumulated error:
\begin{equation}
    \forall s_j \in \{s_{j_1}, \cdots, s_{j_k}\},\quad
    S_j = \{x | \|x-s_j\| \leq c_1 d/[m_{t,l}]_j\},
    \label{eq:c1}
\end{equation}
where $c_1$ is a parameter, $d$ is the average distance of all data samples in $\mathcal{D}_l$:
\begin{equation}
    d = \frac{2}{N_l(N_l-1)}\sum_{p}\sum_{q}\|x_p - x_q\|, \quad p \neq q.
\end{equation}
In this way, we form a training set 
\begin{equation}
    \mathcal{D}_{t,l} = S_1 \cup\cdots\cup S_k,
\end{equation}
which is the \emph{local region} for the fitting of the to-be-added base learner $f_{t, l}$. We deploy the base learner $f_{t, l}$ to the local region $\mathcal{D}_{t,l}$ and fit it on $\mathcal{D}_{t,l}$ by optimizing:
\begin{equation}
  \min _{\theta} \frac{1}{\left|\mathcal{D}_{t,l}\right|} \sum\limits_{\left(\boldsymbol{x}_i,\hat{y}_{i}\right) \in\mathcal{D}_{t,l}}
  \ell_{\operatorname{CE}}\left(f_{t,l}(\boldsymbol{x}_i), \hat{y}_{i}\right),
\end{equation}
where $\hat{y}_i$ is the weak label for instance $\boldsymbol{x}_i$,  and $\ell_{\operatorname{CE}}$ is the cross entropy loss.

The above process reflects the interactive nature of \ours, \emph{i.e.,} the error instances identified on the clean dataset guide the base learner training on the weakly labeled dataset. If we use the clean dataset alone, the limited number of samples are insufficient to support the model fitting \cite{brukhim2021multiclass}. If we use the weakly labeled dataset alone, we cannot distinguish the false positive when updating the error matrix due to the presence of noisy labels. Instead, our approach first accurately identifies the large-error instances on the small clean dataset $\mathcal{D}_c$, then sample regions on the large weakly labeled dataset $\mathcal{D}_l$ based on these instances to gather sufficient supervision for the base learner training. 
Compared to the data reweighting approach in supervised boosting~\cite{freund1995desicion}, {\ours} targets only local regions in each iteration so that the base learners can be trained on more specific feature regimes, which is suitable for WSL setting because it is more difficult to directly learn from the imperfect noisy labels. 
Besides, {\ours} explicitly inherits the boosting philosophy compared to the MoE approach--the local regions evolve iteratively and adaptively to reflect the weakness of the preceding ensemble, thereby the new base learner serves as a complement.

\subsection{Estimate-then-Modify Weighting Scheme with Perturbation}\label{sec:weight}
\begin{figure}
    \centering
    \includegraphics[scale=0.52]{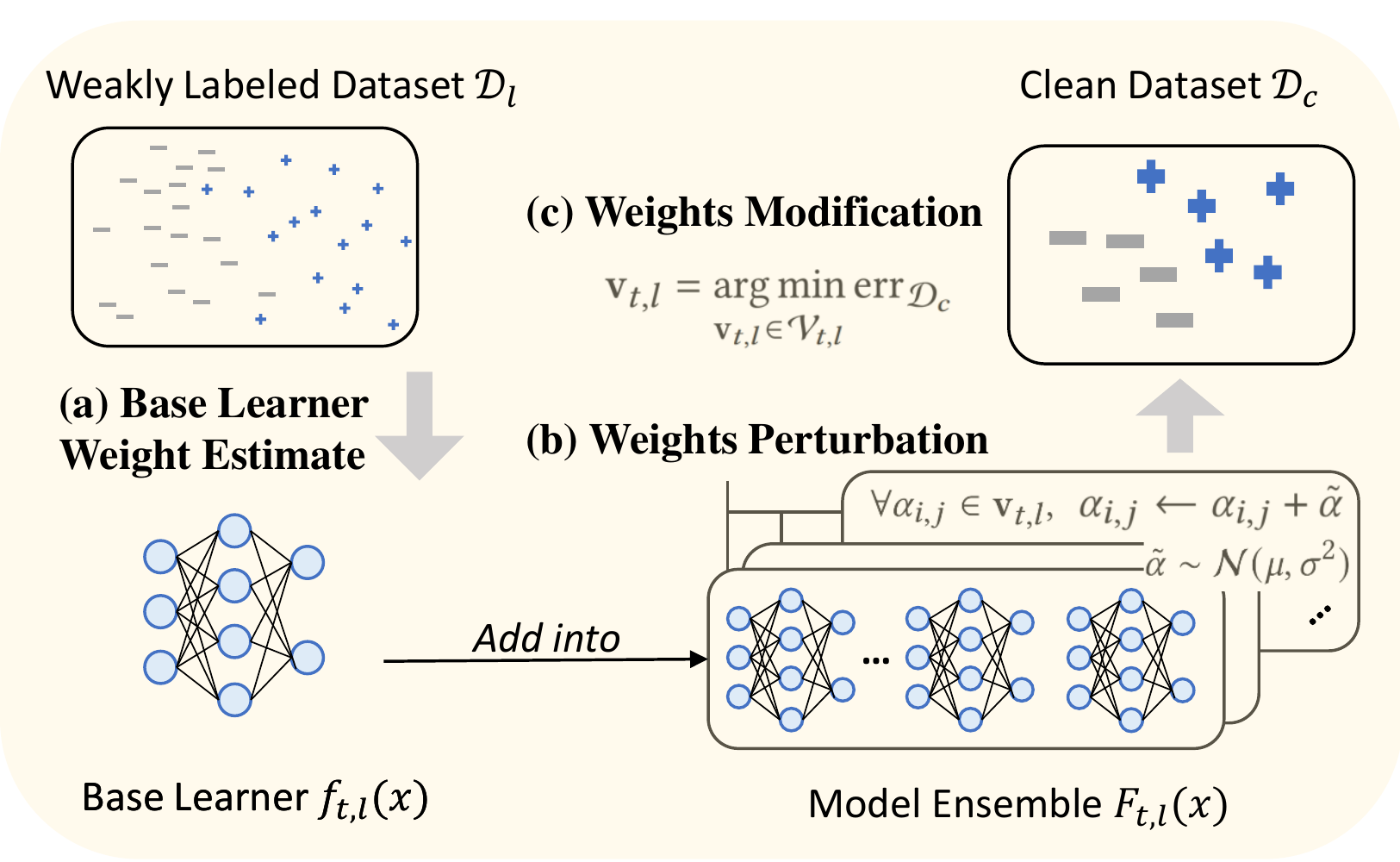}
    \caption{The illustration of Estimate-then-Modify scheme for model weight calculation. We first estimate the base learner weight $\alpha_{t,l}$ on $\mathcal{D}_l$ and form the new ensemble $F_{t,l}(x)$. Next, we generate a group of perturbed weight vectors $\mathcal{V}_{t,l}$ by adding Gaussian and normalizing each of them. Finally, we select the weight vector that achieves the lowest total error on $\mathcal{D}_c$.}
    \label{fig:weight_esco}
\end{figure}
For the weight computation, we present an adaptive design for boosting in the WSL setting, namely the \emph{estimate-then-modify} paradigm. In supervised boosting, it is easy to compute the model weights by the principle of minimizing the total error. This implementation builds upon access to a full clean dataset. In WSL, however, we only have a large weakly labeled dataset, along with a clean validation dataset with only limited examples. Therefore, we propose an estimate-then-modify paradigm for the model weights computation, in which a large number of weak labels are leveraged for the weight estimate, then the limited clean labels are used to rectify the estimated weights.

In particular, we first follow the AdaBoost procedure to estimate the weight $\alpha_{t, l}$ on the weakly labeled dataset $\mathcal{D}_l$, this starts from the data weights initialization on $\mathcal{D}_l$:
\begin{equation}
    w_i = 1/N_l, i = 1,2,\cdots,N_l,
\end{equation}
where $N_l = |\mathcal{D}_l|$. Then we calculate the weighted error rate of the current base learner $f_{t,l}$ by
\begin{equation}
  \operatorname{err}_{\mathcal{D}_l}=\sum_{i}^{N_l} w_{i} \mathbb{I}\left(y_{i} \neq f_{t,l}(x_i)\right).
\end{equation}
It follows the weight calculation of $\alpha_{t,l}$ for the base learner $f_{t,l}$:
\begin{equation}
    \label{eq:alpha}
  \alpha_{t,l}=\log \frac{1-\operatorname{err}_{\mathcal{D}_l}}{\operatorname{err}_{\mathcal{D}_l}}.
\end{equation}
This expression is based on the principle that given the to-be-added base learner, the desired weight should minimize the total error defined on the training set $\mathcal{D}_l$. Finally, we update the data weights:
\begin{equation}
  w_{i} \leftarrow w_{i} \cdot e^{\alpha_{t,l} \mathbb{I}\left(y_{i} \neq F_{t,l}\left(\boldsymbol{x}_i\right)\right)}, i=1,2, \ldots, N_l, \label{eq_weights},
\end{equation}
and normalize it such that $\sum_i w_i = 1$.
Till now, we have obtained an estimated weight of the base learner, the ensemble model can be updated by:
\begin{equation}
\begin{aligned}
    &F_{t, l}(x) =  F_{t, l-1}(x) + \alpha_{t,l}Q(l|x)f_{t,l}(x),\quad l > 1\\ \text{or}\quad 
    &F_{t, l}(x) =  F_{t-1, l}(x) + \alpha_{t,l}Q(l|x)f_{t,l}(x),\quad t > 1
\end{aligned}
\end{equation}
Note that in the above derivation, we directly use the weak labels in $\mathcal{D}_l$ for the error rate computation and the total error minimization. 
However, one key difference for weakly-supervised learning is the absence of a large amount of clean labels. 
Merely using the noisy weak labels can hardly guarantee the boosting progress due to the fact that calculating error rates  involves weak labels, which are less unreliable and can negatively affect the weight estimation.

To address this issue, we further calibrate such estimated weights using the small clean dataset $\mathcal{D}_c$ with a perturbation-based approach. 
Specifically, 
for the iteration $(t, l)$, we have the weight vector $\mathbf{v}_{t, l} = [\alpha_{1,1},\cdots,\alpha_{t,l}]$.  We then add Gaussian perturbation to the weight vector by
\begin{equation}
    \forall \alpha_{i,j} \in \mathbf{v}_{t, l},
    \quad
    \alpha_{i,j} \gets \alpha_{i,j} + \Tilde{\alpha},
    \quad
    \textit{s.t.,}\quad
    \Tilde{\alpha} \sim \mathcal{N}(\mu,\sigma^{2}),
\end{equation}
and normalize the sum of weights to 1 for getting a group of perturbed weight vectors $\mathcal{V}_{t, l} = \{\mathbf{v}_{t, l}\}^{n_p}$, where $n_p$ is the number of perturbations. 

The $\mathcal{V}_{t, l}$ enables us to validate different combinations of the base learners. We define the clean error on the small clean dataset $\mathcal{D}_c$ as
\begin{equation}
    \operatorname{err}_{\mathcal{D}_c} = \sum_{i}^{N_c} \exp (-y_i F_{t,l}\left(x_i\right)),
\end{equation}
where $y_i$ is the clean label on $\mathcal{D}_c$, and  $N_c = |\mathcal{D}_c|$. We select the weight vector with the lowest validation error~\cite{zhang2023not}
\begin{equation}
    \mathbf{v}_{t, l} = \argmin_{ \mathbf{v}_{t, l} \in \mathcal{V}_{t, l}}\operatorname{err}_{\mathcal{D}_c}
\end{equation}
as the modified base learner weights.

The weights computation for the base learners manifests another aspect of the interaction between weak labels and clean labels. There are two natural alternatives to compute the weights, either using the weak labels or the clean labels alone. We have demonstrated that using only weak labels is suboptimal due to the unreliable computation caused by the noise in the weak labels.
On the other hand, if we use clean labels alone, the base learner can easily overfit on the limited number of samples. 
Another alternative is to integrate both the weak labels and clean labels by fitting the base learner on the weakly labeled dataset, while computing the weights using the clean labels. However, we argue that the error made on the clean dataset could be caused either by the distribution shifts between the weak labels and the clean labels due to the limited and biased labeling functions, or by the base learner overfitting to the label noise. 
If we decouple the two steps of base learner fitting and weight computation, it equivalently forces the boosting process to the clean dataset only. This ignores the fact that an ensemble well-performed on a large dataset, though weakly labeled, can generalize better than an overfitted one. 
We empirically validate the above statement in Sec.~\ref{sec:exp_weight}.

\section{Experiments}
\begin{table*}[!htb]
    \centering
    \caption{Main Results. $^{*}$: Results are copied from the corresponding paper.}
    \scalebox{0.9}{
    \begin{tabular}{ccccccccc}
    \toprule
        & \textbf{IMDb (Acc.)} &  \textbf{Yelp (Acc.)}    &  \textbf{Youtube (Acc.)}  & \textbf{AGNews (Acc.)}    & \textbf{TREC (Acc.)}  & \textbf{CDR (F1)}   & \textbf{SemEval (F1)}  & \textbf{Mean} \\\midrule
       Gold & 91.58 $\pm$ 0.31 & 95.48 $\pm$ 0.53 & 97.52 $\pm$ 0.64 & 90.78 $\pm$ 0.49 & 96.24 $\pm$ 0.61 & 65.39 $\pm$ 1.18 & 95.43 $\pm$ 0.65& 90.35 \\\midrule
       Majority Voting  & 77.14 $\pm$ 0.13 & 84.91 $\pm$ 0.13 & 90.16 $\pm$ 0.13 & 63.88 $\pm$ 0.13  &  66.56 $\pm$ 1.20   & 58.89 $\pm$ 0.50  & 85.53 $\pm$ 0.13& 75.30 \\
       Weighted Voting  & 76.90 $\pm$ 0.24  &   85.45 $\pm$ 1.21    &   92.48 $\pm$ 0.16    &   83.54  $\pm$ 0.18   &   66.00 $\pm$ 2.33  & 57.53 $\pm$ 0.46  &   83.77 $\pm$ 2.93 & 77.95   \\
       Dawid-Skene& 80.25 $\pm$ 2.23 & 88.59 $\pm$ 1.25 & 92.88 $\pm$ 0.78 & 86.69 $\pm$ 0.35 & 48.40 $\pm$ 0.95 &  50.49 $\pm$ 0.48 & 71.70 $\pm$ 0.81& 74.14 \\
       Data Programming& 80.82 $\pm$ 1.29 & 82.90 $\pm$ 3.69 & 93.60 $\pm$ 0.98 & 86.55 $\pm$ 0.08& 68.64 $\pm$ 3.57 &  58.48 $\pm$ 0.73 & 83.93 $\pm$ 0.83& 79.27 \\
       MeTaL & 81.23 $\pm$ 1.23 & 88.29 $\pm$ 1.57 & 92.48 $\pm$ 0.99 & 86.82 $\pm$ 0.23 & 62.44 $\pm$ 2.96 & 58.48 $\pm$ 0.90 &  71.47 $\pm$ 0.57& 77.32 \\
       FlyingSquid& 82.26 $\pm$ 1.41 & 88.86 $\pm$ 0.92 & 91.84 $\pm$ 2.10 & 86.29 $\pm$ 0.49 & 30.96 $\pm$ 4.04 & 35.25 $\pm$ 5.75 & 31.83 $\pm$ 0.00& 63.90 \\
        EBCC$^*$ & 67.99 & 72.87 & 86.57 & 55.94 & 46.94 & 23.89 & 33.80& 55.43   \\
        FABLE$^*$ & 73.96 & 72.50 & 88.86 & 62.74 & 53.20 & 62.15 & 74.32 &69.68  \\\midrule
        Denoise & 76.22 $\pm$ 0.37 & 71.56 $\pm$ 0.00 & 76.56 $\pm$ 0.00 & 83.45 $\pm$ 0.11 & 56.20 $\pm$ 6.73 & 56.54 $\pm$ 0.37 & 80.83 $\pm$ 1.31& 71.62  \\
         WeaSEL& 84.99 $\pm$ 1.17 & 86.67 $\pm$ 2.23 & 76.80 $\pm$ 15.20 & 85.30 $\pm$ 0.72 & 62.50 $\pm$ 7.40 & 46.20 $\pm$ 12.30 & 44.30 $\pm$ 2.65& 69.54 \\ \midrule
       \rowcolor{gray!20}  \ours &  85.74 $\pm$ 1.12 & 91.50  $\pm$ 0.83 & 94.93  $\pm$ 0.79 & 88.92  $\pm$ 0.44 & 69.72 $\pm$ 1.47 & 60.29 $\pm$ 0.24 & 86.35 $\pm$ 0.57& 82.49\\ 
    \bottomrule
    \end{tabular}
    }
    \label{tab:main-res}
\end{table*}

\subsection{Experiment Setup}

\subsubsection{Datasets}
We conduct main experiments on seven public datasets from the WRENCH benchmark~\cite{zhang2021wrench}, including 1) \textbf{IMDb}~\cite{maas-EtAl:2011:ACL-HLT2011} for movie review classification; 2) \textbf{Yelp}~\cite{AGNews} for sentiment analysis; 3) \textbf{YouTube}~\cite{youtube} for spam classification; 4) \textbf{AGNews}~\cite{AGNews} for news topic classification; 5) \textbf{TREC}~\cite{trec} for web query classification; 6) \textbf{CDR}~\cite{davis2017comparative} for biomedical relation classification; and 7) \textbf{SemEval}~\cite{hendrickx2010semeval} for web text relation classification. The details for these datasets are exhibited in table \ref{tab:dataset_stats_class}.

\subsubsection{Metrics} We strictly follow the evaluation protocol in WRENCH benchmark. Specifically, for text classification, we use \emph{Accuracy} as the metric. For relation classification, we use \emph{F1} score as the metric.

\subsubsection{Baselines} We compare {\ours} with the following
baselines:
\begin{itemize}[leftmargin=0.5cm]
  \item \textbf{Majority voting}: It predicts the label of each data point using the most common prediction from LFs.
  \item \textbf{Weighted majority voting}: It extends the majority voting by reweighting the final votes using the label prior.
  \item \textbf{Dawid-Skene}~\cite{dawid1979maximum}: It estimates the accuracy of each LF by assuming a naive Bayes distribution over the LFs’ votes and models the ground truth as the latent variable.
  \item \textbf{Data Programming}~\cite{ratner2017snorkel}: It models the distributions between label and LFs as a factor graph to  reflect the dependency between any subset of random variables. It uses Gibbs sampling for maximum likelihood optimization.
  \item \textbf{MeTaL}~\cite{ratner2019training}: It models the distribution via a Markov Network and estimates the parameters via matrix completion.
  \item \textbf{FlyingSquid}~\cite{fu2020fast}: It models the distribution between LF and labels as a binary Ising model and uses a Triplet Method to recover the parameters.
  \item \textbf{EBCC}~\cite{li2019exploiting}: It is a method originally proposed for crowdsourcing, which models the relation between workers’ annotation and the ground truth label by independent assumption.
  \item \textbf{FABLE}~\cite{zhang2022leveraging}: It incorporates instance features into a statistical label model for better label aggregation.
  \item \textbf{Denoise}~\cite{ren2020denoising}: It uses an attention-based mechanism for aggregating over weak labels, and co-trains an additional neural classifier to encode the instance embeddings.
  \item \textbf{WeaSEL}~\cite{ruhling2021end}: It is an end-to-end approach for WSL, which maximizes the agreement between the neural label model and end model for better performance.
\end{itemize}

\subsubsection{Implementation Details}
We keep the number of iterations $T=5$ for all the experiments, the number of weak sources $p$ varies based on the number of LFs as shown in Table~\ref{tab:dataset_stats_class}. Specifically, we set $p = \#LF$ for IMDb, Yelp, YouTube, and AGNews. For the other datasets, it will bring a redundant ensemble if we implement inter-source boosting for all the LFs. Instead, for TREC and SemEval, we group the LFs based on the labels. For CDR, we manually divide the LFs to 6 groups (3 groups for each label). The source-index datasets are constructed accordingly, in which the index of LFs or LF groups are used as labels. To learn the conditional function, we deploy a multi-layer perceptron with 2 hidden layers, the shape of the output layer varies according to the number of weak sources for different datasets.
We use the training set shown in Table~\ref{tab:dataset_stats_class} as the weakly labeled datasets $\mathcal{D}_l$, where the labels are generated by their LFs. We use a subset of the validation set as the small clean dataset $\mathcal{D}_c$. For YouTube and SemEval, we set the $|\mathcal{D}_c|$ as $120$ and $200$. For AGNews, we set the $|\mathcal{D}_c| = 1000$. For the other datasets, we keep $|\mathcal{D}_c| = 500$. The clean data are only for the usage of weights computation and error instance identification, not involved in the base learner training. We use BERT-base~\cite{devlin2018bert} with 110M parameters as the backbone model and AdamW \cite{loshchilov2018decoupled} as the optimizer. More  details on hyperparameters are introduced in Appendix~\ref{app:impl}.

\vspace{-5pt}
\subsection{Main Results}
The results in Table~\ref{tab:main-res} compare the performance of our method {\ours} with the baselines. On all datasets, {\ours} consistently outperforms these strong  baselines, with performance improvements ranging from 1.08\% to 3.48\% compared to the strongest baselines. 
On average across the seven datasets, {\ours} reaches 91.3\% of the performance obtained with fully-supervised learning, significantly reducing the gap between weakly-supervised learning and fully-supervised learning.

The performance of our method is significantly better compared to the voting methods. This demonstrates that our conditional function can effectively boost the performance of the base learners by computing the conditional probability of weak sources given an input sample. Unlike the voting methods, which simply average or weight-average the predictions from different sources, the conditional function allows the base learners to emphasize on most reliable labeling functions when combined.

When comparing with the label aggregation methods, the significant improvement demonstrates the advantage of boosting methods over the single end model. 
Take the strongest baseline (Data Programming) for instance.
On the IMDb and Yelp datasets, {\ours} achieves a performance gain of 4.92\% and 8.60\%, respectively.
Although label aggregation methods can model the distributions between labels and weak sources, the single end model trained on the entire weakly labeled dataset fails to obtain a locality to enhance itself.
Instead, we introduce locality to the base learners while retaining the boosting nature, so the ensemble model in \ours\ can self-strengthen by adding complementary base learners iteratively.


\begin{figure*}[!htb]
  \centering
  \begin{subfigure}{0.33\textwidth}
    \includegraphics[width=\textwidth]{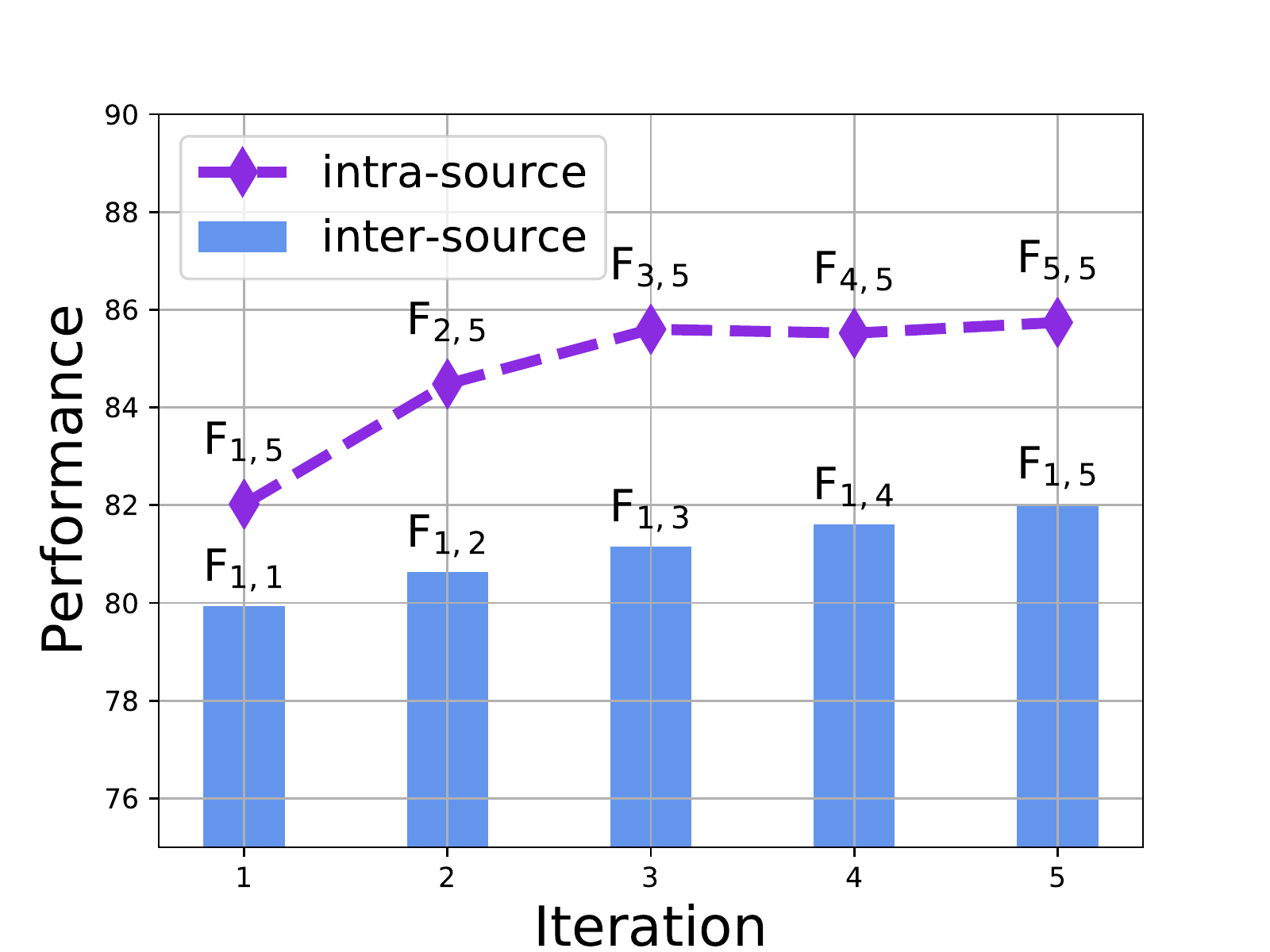}
    \caption{IMDb}
    \label{fig:first}
  \end{subfigure}
  \hfill
  \begin{subfigure}{0.33\textwidth}
    \includegraphics[width=\textwidth]{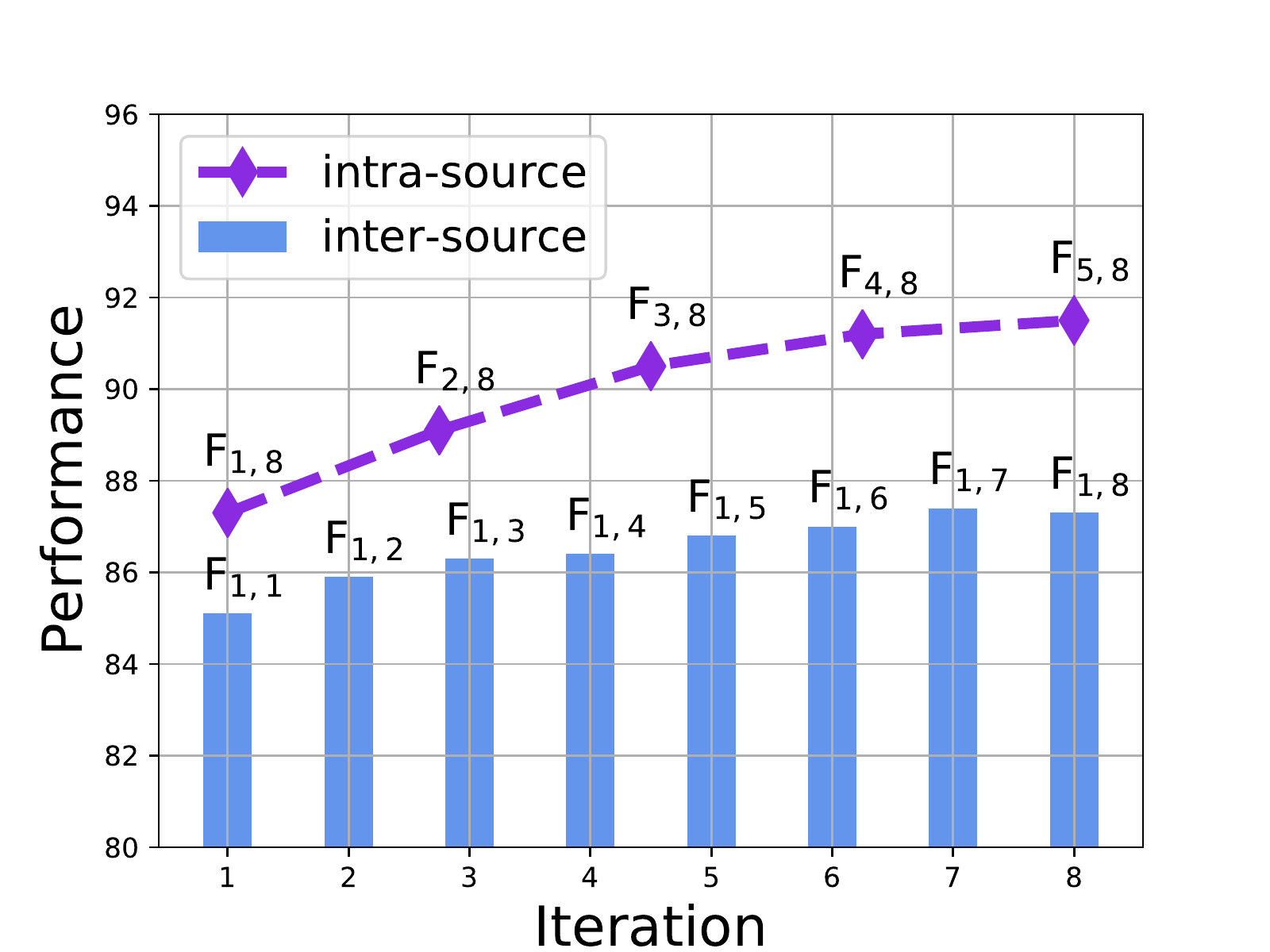}
    \caption{Yelp}
    \label{fig:second}
  \end{subfigure}
  \hfill
  \begin{subfigure}{0.33\textwidth}
    \includegraphics[width=\textwidth]{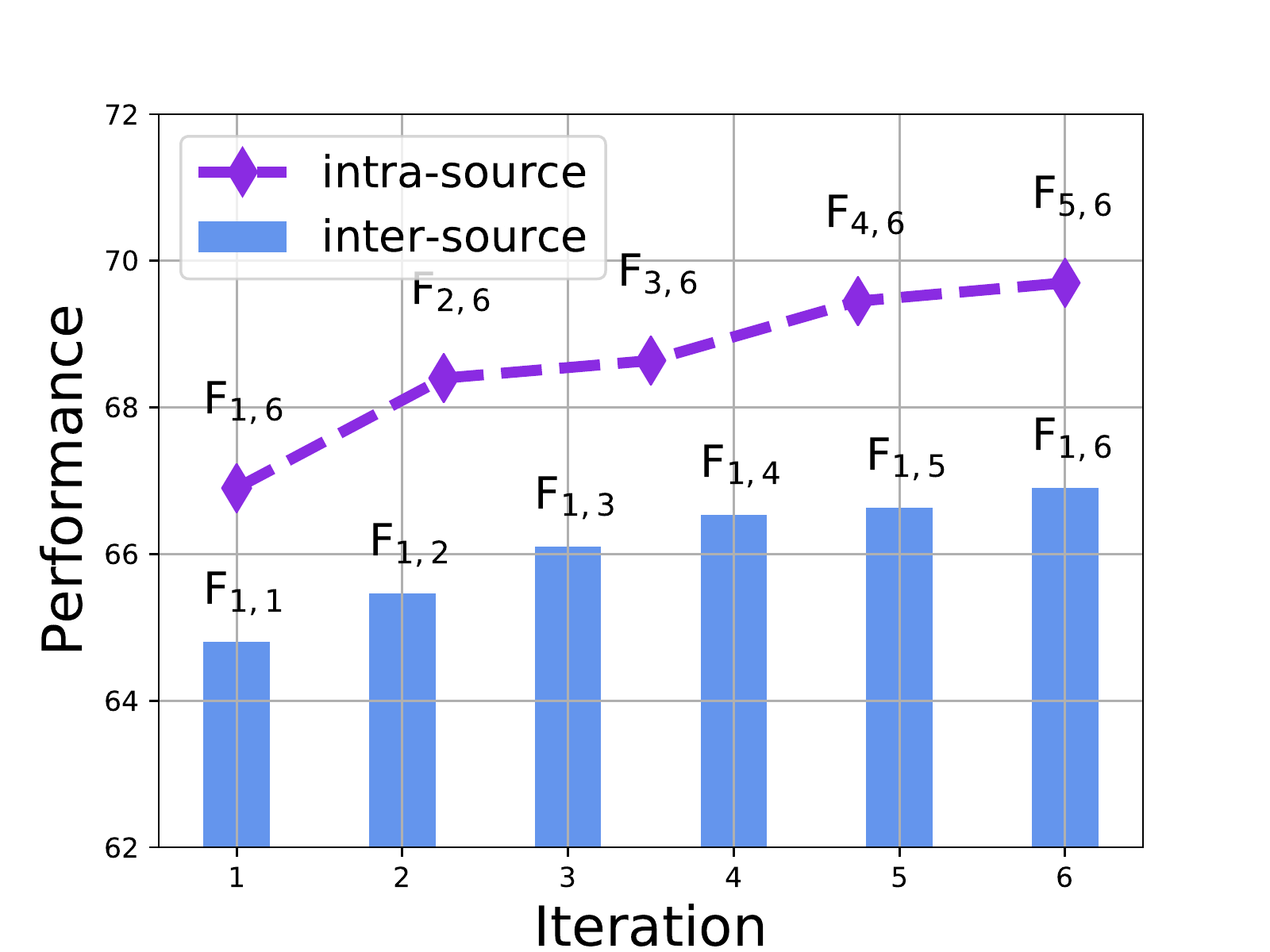}
    \caption{TREC}
    \label{fig:third}
  \end{subfigure}
\vspace{-4ex}
  \caption{Iterative performance of the two-dimension boosting. For the intra-source boosting, we show its performance change in the first $t=1$ loop. For the inter-source boosting, we show the ensemble at the end of each outer iteration,  where $l=p$. }
  \label{fig:2d_boosting}
\end{figure*}

\subsection{Two-dimension Boosting}

Fig.~\ref{fig:2d_boosting} shows the iterative results in the two-dimension boosting process.
For inter-source boosting, we plot its performance change at iteration $(t,l)$ such that $t=1, l\in[p]$.
For intra-source boosting, we plot its performance change at iteration $(t,l)$ such that $l=p, t\in[T]$.
We observe a consistent improvement in performance through inter-source boosting, indicating that the base learners can complement each other and build upon the models trained from other weak sources.
In the early stages of inter-source boosting, there is a significant improvement in performance, demonstrating that the weak regions of the previous ensemble model are effectively learned by the following base learners. However, the performance gains become relatively modest towards the end, as the ensemble model has already combined sufficient base learners, and the remaining error regions may not be well learned due to the limitations of weak supervision, even with the addition of more base learners.


\subsection{Study of the Conditional Function}
In this set of experiments, we study the effect of the conditional function $Q(l|x)$ as described in Sec.~\ref{sec:cond_func}.
We considered two baselines, one where the conditional function is disabled, and another where the conditional function is replaced with labeling function (LF) matching.
The first baseline  simply ensembles the base learners without distinguishing between intra-source and inter-source boosting.
For the second baseline, the direct LF matching is a natural alternative to the learned conditional function. Specifically, to represent the conditional probability of weak sources given a data sample, we replace the $Q(l|x)$ in the \ours\ framework with the entries in a normalized LF matching vector.
\begin{table}[!htb]
    \centering
    \caption{Study of the conditional function}
    \begin{tabular}{cccc}\toprule
         &\cellcolor{gray!20} \ours & w/o. cond. func.  & w/ LF matching  \\\midrule
            \textbf{IMDb} & \cellcolor{gray!20} 85.74 $\pm$ 1.12 & 80.37 $\pm$ 1.23& 84.92 $\pm$ 1.63\\ 
          \textbf{Yelp} &\cellcolor{gray!20} 91.50  $\pm$ 0.83 & 88.07 $\pm$ 2.45 &88.89  $\pm$ 2.10 \\ 
          \textbf{Youtube}  &\cellcolor{gray!20} 94.93  $\pm$ 0.79 & 93.31  $\pm$ 1.14 & 93.23 $\pm$  1.27\\
          \textbf{AGNews}  &\cellcolor{gray!20} 88.92  $\pm$ 0.44 & 87.19 $\pm$ 0.54 & 88.45 $\pm$ 0.24 \\
          \textbf{TREC}  &\cellcolor{gray!20} 69.72 $\pm$ 1.47 & 67.78 $\pm$ 2.19 & 68.42 $\pm$ 1.31\\
          \textbf{CDR}   &\cellcolor{gray!20} 60.29 $\pm$ 0.24 & 56.19 $\pm$ 2.56 & 58.79 $\pm$ 1.43\\
          \textbf{SemEval}  & \cellcolor{gray!20} 86.35 $\pm$ 0.57& 85.64 $\pm$ 0.76 & 86.23 $\pm$ 0.48\\\midrule
          \textbf{Mean} &\cellcolor{gray!20} 82.49 & 79.79 & 81.28 \\\bottomrule
    \end{tabular}
    \label{tab:cond_func}
\end{table}

Table~\ref{tab:cond_func} compares the learning framework variants and demonstrates the advantages of \ours.
It is consistently better on all seven datasets, with average performance gains of 2.70\% and 1.21\%.
The comparison with the baseline without $Q(l|x)$ highlights the superiority of introducing a conditional function to account for weak sources.
Disabling $Q(l|x)$ results in a standard convex combination, which is not suitable for the WSL scenario.
The baseline using direct LF matching instead of the conditional function shows improved performance over the original baseline, supporting the need to modulate base learners for weak sources.
However, it still falls behind \ours\ by 1.21\%, as LF labeling conflicts undermine the proper assignment of base learners to weak sources.
The conditional function learned on the source-index dataset can generalize better during inference, benefiting the entire ensemble, compared to hard LF matching.

\subsection{Study of the Estimate-then-Modify Scheme}\label{sec:exp_weight}
In this set of experiments, we examine the benefits of our estimate-then-modify scheme for calculating weights in the WSL setting and the importance of the interaction between weak labels and clean labels.
We compare with two baselines, both of which are variants of \ours\ that use different weight calculation methods.
The first baseline uses the AdaBoost approach to compute base learner weights on the weakly labeled dataset, while the second integrates both weak and clean labels but only uses weak labels for base learner training and calculates weights solely on the clean dataset.

\begin{table}[!htb]
    \centering
    \caption{Study of the estimate-then-modify weighting}
    \begin{tabular}{cccc}\toprule
         &\cellcolor{gray!20} \ours & only weak labels  & integrated mode  \\ \midrule
          \textbf{IMDb} & \cellcolor{gray!20} 85.74 $\pm$ 1.12 & 79.55 $\pm$ 1.78 & 80.86 $\pm$ 1.53	 \\   
          \textbf{Yelp} &\cellcolor{gray!20} 91.50  $\pm$ 0.83 &87.78 $\pm$ 2.46 & 88.34 $\pm$ 1.20\\   
          \textbf{Youtube}  &\cellcolor{gray!20} 94.93  $\pm$ 0.79 & 92.81 $\pm$ 0.95& 93.07 $\pm$ 0.89\\ 
          \textbf{AGNews}  &\cellcolor{gray!20} 88.92  $\pm$ 0.44 & 85.60 $\pm$ 1.16 & 88.12 $\pm$ 0.42	\\ 
          \textbf{TREC}  &\cellcolor{gray!20} 69.72 $\pm$ 1.47 & 57.93 $\pm$ 4.85 &	66.30 $\pm$ 2.66\\ 
          \textbf{CDR}   &\cellcolor{gray!20} 60.29 $\pm$ 0.24 & 54.62 $\pm$ 1.69	& 58.75 $\pm$ 0.34\\ 
          \textbf{SemEval}  &\cellcolor{gray!20} 86.35 $\pm$ 0.57 & 82.86 $\pm$ 1.03 & 84.76 $\pm$ 0.71\\\midrule
          \textbf{Mean} &\cellcolor{gray!20} 82.49& 77.31 & 80.03 \\\bottomrule
    \end{tabular}
    \label{tab:interaction}
\end{table}
Table~\ref{tab:interaction} shows that \ours\ outperforms the two variants by significant margins. \ours\ has a performance improvement of 5.18\% compared to the baseline using only weak labels, supporting our hypothesis that weak labels are unreliable for weight computation.
Despite the integration of both weak and clean labels in the second baseline, \ours\ still leads by 2.46\%.
We believe that the disconnected training of base learners and weight calculation weakens the learning framework, as it cannot provide an appropriate combination of base learners when the weakly labeled data deviates from the clean data.
On the other hand, \ours\ leverages the estimate-then-modify paradigm to consider both weak and clean labels, which results in a more generalizable model due to the interaction between the two types of labels.

\section{Conclusion}

We presented a novel iterative and adaptive learning framework, \ours\, to boost the ensemble model in the setting of weakly supervised learning. While preserving the key concepts of traditional boosting methods, we introduced locality to base learners and designed an interaction between weak labels and clean labels to adapt to the WSL setting. The adaptations included the use of local base learners, the incorporation of a conditional function to account for weak sources, and the application of the estimate-then-modify scheme for weight computation. Specifically, we localized the base learners to the iteratively updated error regions in the embedding space, thereby overcoming the issues of weight domination in vanilla boosting for WSL and the challenges of hard clustering in the MoE approach. To handle weak sources such as labeling functions in WSL settings, we designed a conditional function to modulate the base learners towards weak sources with high relevance. This addressed the issue of poor ensemble performance in the standard convex combination approach in WSL settings. Finally, we proposed an estimate-then-modify scheme for weight computation. Our comprehensive empirical study on seven datasets demonstrated the effectiveness and advantages of \ours~ compared to standard ensemble methods and WSL baselines.
\begin{acks}
  This work was supported by NSF IIS-2008334, IIS-2106961, CAREER IIS-2144338, as well as a generous gift from The Home Depot.
\end{acks}



\balance
\bibliographystyle{unsrtnat}
\bibliography{sample-base}

\begin{thebibliography}{62}
\providecommand{\natexlab}[1]{#1}
\providecommand{\url}[1]{\texttt{#1}}
\expandafter\ifx\csname urlstyle\endcsname\relax
  \providecommand{\doi}[1]{doi: #1}\else
  \providecommand{\doi}{doi: \begingroup \urlstyle{rm}\Url}\fi

\bibitem[Awasthi et~al.(2020)Awasthi, Ghosh, Goyal, and
  Sarawagi]{awasthi2020learning}
Abhijeet Awasthi, Sabyasachi Ghosh, Rasna Goyal, and Sunita Sarawagi.
\newblock Learning from rules generalizing labeled exemplars.
\newblock In \emph{International Conference on Learning Representations}, 2020.

\bibitem[Safranchik et~al.(2020)Safranchik, Luo, and
  Bach]{safranchik2020weakly}
Esteban Safranchik, Shiying Luo, and Stephen Bach.
\newblock Weakly supervised sequence tagging from noisy rules.
\newblock In \emph{Proceedings of the AAAI Conference on Artificial
  Intelligence}, volume~34, pages 5570--5578, 2020.

\bibitem[Zhang et~al.(2022{\natexlab{a}})Zhang, West, Cui, and
  Zhang]{zhang2022adaptive}
Rongzhi Zhang, Rebecca West, Xiquan Cui, and Chao Zhang.
\newblock Adaptive multi-view rule discovery for weakly-supervised compatible
  products prediction.
\newblock In \emph{KDD}, pages 4521--4529, 2022{\natexlab{a}}.

\bibitem[Zhang et~al.(2021{\natexlab{a}})Zhang, Yu, Li, Wang, Yang, Yang, and
  Ratner]{zhang2021wrench}
Jieyu Zhang, Yue Yu, Yinghao Li, Yujing Wang, Yaming Yang, Mao Yang, and
  Alexander Ratner.
\newblock {WRENCH}: A comprehensive benchmark for weak supervision.
\newblock In \emph{NeurIPS}, 2021{\natexlab{a}}.

\bibitem[Freund and Schapire(1995)]{freund1995desicion}
Yoav Freund and Robert~E Schapire.
\newblock A desicion-theoretic generalization of on-line learning and an
  application to boosting.
\newblock In \emph{Computational Learning Theory: Second European Conference},
  pages 23--37. Springer, 1995.

\bibitem[Freund et~al.(1996)Freund, Schapire, et~al.]{freund1996experiments}
Yoav Freund, Robert~E Schapire, et~al.
\newblock Experiments with a new boosting algorithm.
\newblock In \emph{icml}, volume~96, pages 148--156. Citeseer, 1996.

\bibitem[Mason et~al.(1999)Mason, Baxter, Bartlett, and
  Frean]{mason1999boosting}
Llew Mason, Jonathan Baxter, Peter Bartlett, and Marcus Frean.
\newblock Boosting algorithms as gradient descent.
\newblock \emph{Advances in neural information processing systems}, 12, 1999.

\bibitem[Chen and Guestrin(2016)]{chen2016xgboost}
Tianqi Chen and Carlos Guestrin.
\newblock Xgboost: A scalable tree boosting system.
\newblock In \emph{Proceedings of the 22nd acm sigkdd international conference
  on knowledge discovery and data mining}, pages 785--794, 2016.

\bibitem[R{\"a}tsch et~al.(2005)R{\"a}tsch, Warmuth, and
  Shawe-Taylor]{ratsch2005efficient}
Gunnar R{\"a}tsch, Manfred~K Warmuth, and John Shawe-Taylor.
\newblock Efficient margin maximizing with boosting.
\newblock \emph{Journal of Machine Learning Research}, 6\penalty0 (12), 2005.

\bibitem[Jacobs et~al.(1991)Jacobs, Jordan, Nowlan, and
  Hinton]{jacobs1991adaptive}
Robert~A Jacobs, Michael~I Jordan, Steven~J Nowlan, and Geoffrey~E Hinton.
\newblock Adaptive mixtures of local experts.
\newblock \emph{Neural computation}, 3\penalty0 (1):\penalty0 79--87, 1991.

\bibitem[Tsai et~al.(2021)Tsai, Li, and Zhu]{tsai2021mice}
Tsung~Wei Tsai, Chongxuan Li, and Jun Zhu.
\newblock Mice: Mixture of contrastive experts for unsupervised image
  clustering.
\newblock In \emph{International conference on learning representations}, 2021.

\bibitem[Zhuang et~al.(2022)Zhuang, Li, Cheung, Yu, Mou, Chen, Song, and
  Zhang]{zhuang2022resel}
Yuchen Zhuang, Yinghao Li, Jerry~Junyang Cheung, Yue Yu, Yingjun Mou, Xiang
  Chen, Le~Song, and Chao Zhang.
\newblock Resel: N-ary relation extraction from scientific text and tables by
  learning to retrieve and select.
\newblock \emph{arXiv preprint arXiv:2210.14427}, 2022.

\bibitem[Xu et~al.(2023)Xu, Yu, Ho, and Yang]{xu2023weakly}
Ran Xu, Yue Yu, Joyce~C Ho, and Carl Yang.
\newblock Weakly-supervised scientific document classification via
  retrieval-augmented multi-stage training.
\newblock In \emph{the 46th International ACM SIGIR Conference on Research and
  Development in Information Retrieval}, 2023.

\bibitem[Hoffmann et~al.(2011)Hoffmann, Zhang, Ling, Zettlemoyer, and
  Weld]{hoffmann2011knowledge}
Raphael Hoffmann, Congle Zhang, Xiao Ling, Luke Zettlemoyer, and Daniel~S Weld.
\newblock Knowledge-based weak supervision for information extraction of
  overlapping relations.
\newblock In \emph{Proceedings of the 49th annual meeting of the association
  for computational linguistics: human language technologies}, pages 541--550,
  2011.

\bibitem[Smith et~al.(2022)Smith, Fries, Hancock, and Bach]{smith2022language}
Ryan Smith, Jason~A Fries, Braden Hancock, and Stephen~H Bach.
\newblock Language models in the loop: Incorporating prompting into weak
  supervision.
\newblock \emph{arXiv preprint arXiv:2205.02318}, 2022.

\bibitem[Yu et~al.(2023)Yu, Zhuang, Zhang, Meng, Shen, and Zhang]{yu2023zero}
Yue Yu, Yuchen Zhuang, Rongzhi Zhang, Yu~Meng, Jiaming Shen, and Chao Zhang.
\newblock Regen: Zero-shot text classification via training data generation
  with progressive dense retrieval.
\newblock In \emph{Findings of ACL}, 2023.

\bibitem[Zhang et~al.(2022{\natexlab{b}})Zhang, Yu, Shetty, Song, and
  Zhang]{zhang2022prompt}
Rongzhi Zhang, Yue Yu, Pranav Shetty, Le~Song, and Chao Zhang.
\newblock Prboost: Prompt-based rule discovery and boosting for interactive
  weakly-supervised learning.
\newblock In \emph{Proceedings of the 60th Annual Meeting of the Association
  for Computational Linguistics (Volume 1: Long Papers)}, pages 745--758,
  2022{\natexlab{b}}.

\bibitem[Zhang et~al.(2022{\natexlab{c}})Zhang, Hsieh, Yu, Zhang, and
  Ratner]{zhang2022survey}
Jieyu Zhang, Cheng-Yu Hsieh, Yue Yu, Chao Zhang, and Alexander Ratner.
\newblock A survey on programmatic weak supervision.
\newblock \emph{arXiv preprint arXiv:2202.05433}, 2022{\natexlab{c}}.

\bibitem[Ratner et~al.(2017)Ratner, Bach, Ehrenberg, Fries, Wu, and
  R{\'e}]{ratner2017snorkel}
Alexander Ratner, Stephen~H Bach, Henry Ehrenberg, Jason Fries, Sen Wu, and
  Christopher R{\'e}.
\newblock Snorkel: Rapid training data creation with weak supervision.
\newblock In \emph{VLDB}, volume~11, page 269, 2017.

\bibitem[Ratner et~al.(2019)Ratner, Hancock, Dunnmon, Sala, Pandey, and
  R{\'e}]{ratner2019training}
Alexander Ratner, Braden Hancock, Jared Dunnmon, Frederic Sala, Shreyash
  Pandey, and Christopher R{\'e}.
\newblock Training complex models with multi-task weak supervision.
\newblock In \emph{Proceedings of the AAAI Conference on Artificial
  Intelligence}, pages 4763--4771, 2019.

\bibitem[Fu et~al.(2020)Fu, Chen, Sala, Hooper, Fatahalian, and
  R\'e]{fu2020fast}
Daniel~Y. Fu, Mayee~F. Chen, Frederic Sala, Sarah~M. Hooper, Kayvon Fatahalian,
  and Christopher R\'e.
\newblock Fast and three-rious: Speeding up weak supervision with triplet
  methods.
\newblock In \emph{Proceedings of the 37th International Conference on Machine
  Learning}, 2020.

\bibitem[Zhang et~al.(2022{\natexlab{d}})Zhang, Song, and
  Ratner]{zhang2022leveraging}
Jieyu Zhang, Linxin Song, and Alexander Ratner.
\newblock Leveraging instance features for label aggregation in programmatic
  weak supervision.
\newblock \emph{arXiv preprint arXiv:2210.02724}, 2022{\natexlab{d}}.

\bibitem[Stephan et~al.(2022)Stephan, Kougia, and
  Roth]{stephan-etal-2022-sepll}
Andreas Stephan, Vasiliki Kougia, and Benjamin Roth.
\newblock {S}ep{LL}: Separating latent class labels from weak supervision
  noise.
\newblock In \emph{Findings of EMNLP}, December 2022.

\bibitem[Yu et~al.(2021)Yu, Zuo, Jiang, Ren, Zhao, and Zhang]{yu2021fine}
Yue Yu, Simiao Zuo, Haoming Jiang, Wendi Ren, Tuo Zhao, and Chao Zhang.
\newblock Fine-tuning pre-trained language model with weak supervision: A
  contrastive-regularized self-training approach.
\newblock In \emph{NAACL}, pages 1063--1077, 2021.

\bibitem[Wu et~al.(2020)Wu, Kuang, Zhang, Liu, Sun, Xiao, Zhuang, Si, and
  Wu]{wu2020biased}
Yiquan Wu, Kun Kuang, Yating Zhang, Xiaozhong Liu, Changlong Sun, Jun Xiao,
  Yueting Zhuang, Luo Si, and Fei Wu.
\newblock De-biased court’s view generation with causality.
\newblock In \emph{Proceedings of the 2020 Conference on Empirical Methods in
  Natural Language Processing (EMNLP)}, pages 763--780, 2020.

\bibitem[Li and Zhang(2021)]{li2021improved}
Dongyue Li and Hongyang Zhang.
\newblock Improved regularization and robustness for fine-tuning in neural
  networks.
\newblock In \emph{NeurIPS}, 2021.

\bibitem[R{\"u}hling~Cachay et~al.(2021)R{\"u}hling~Cachay, Boecking, and
  Dubrawski]{ruhling2021end}
Salva R{\"u}hling~Cachay, Benedikt Boecking, and Artur Dubrawski.
\newblock End-to-end weak supervision.
\newblock \emph{Advances in Neural Information Processing Systems},
  34:\penalty0 1845--1857, 2021.

\bibitem[Ren et~al.(2020)Ren, Li, Su, Kartchner, Mitchell, and
  Zhang]{ren2020denoising}
Wendi Ren, Yinghao Li, Hanting Su, David Kartchner, Cassie Mitchell, and Chao
  Zhang.
\newblock Denoising multi-source weak supervision for neural text
  classification.
\newblock In \emph{Findings of the Association for Computational Linguistics:
  EMNLP 2020}, pages 3739--3754, 2020.

\bibitem[Sam and Kolter(2022)]{sam2022losses}
Dylan Sam and J~Zico Kolter.
\newblock Losses over labels: Weakly supervised learning via direct loss
  construction.
\newblock \emph{arXiv preprint arXiv:2212.06921}, 2022.

\bibitem[Shin et~al.(2022)Shin, Li, Vishwakarma, Roberts, and
  Sala]{shin2022universalizing}
Changho Shin, Winfred Li, Harit Vishwakarma, Nicholas~Carl Roberts, and
  Frederic Sala.
\newblock Universalizing weak supervision.
\newblock In \emph{International Conference on Learning Representations}, 2022.

\bibitem[Wu et~al.(2022)Wu, Liu, Lu, Zhang, Feng, Sun, Wu, and
  Kuang]{wu2022towards}
Yiquan Wu, Yifei Liu, Weiming Lu, Yating Zhang, Jun Feng, Changlong Sun, Fei
  Wu, and Kun Kuang.
\newblock Towards interactivity and interpretability: A rationale-based legal
  judgment prediction framework.
\newblock In \emph{Proceedings of the 2022 Conference on Empirical Methods in
  Natural Language Processing}, pages 4787--4799, 2022.

\bibitem[Zhang et~al.(2020)Zhang, Yu, and Zhang]{zhang2020seqmix}
Rongzhi Zhang, Yue Yu, and Chao Zhang.
\newblock Seqmix: Augmenting active sequence labeling via sequence mixup.
\newblock In \emph{EMNLP}, pages 8566--8579, 2020.

\bibitem[Biegel et~al.(2021)Biegel, El-Khatib, Oliveira, Baak, and
  Aben]{biegel2021active}
Samantha Biegel, Rafah El-Khatib, Luiz Otavio Vilas~Boas Oliveira, Max Baak,
  and Nanne Aben.
\newblock Active weasul: Improving weak supervision with active learning.
\newblock \emph{arXiv preprint arXiv:2104.14847}, 2021.

\bibitem[Yu et~al.(2022{\natexlab{a}})Yu, Kong, Zhang, Zhang, and
  Zhang]{yu2022actune}
Yue Yu, Lingkai Kong, Jieyu Zhang, Rongzhi Zhang, and Chao Zhang.
\newblock Actune: Uncertainty-based active self-training for active fine-tuning
  of pretrained language models.
\newblock In \emph{NAACL}, pages 1422--1436, 2022{\natexlab{a}}.

\bibitem[Yu et~al.(2022{\natexlab{b}})Yu, Zhang, Xu, Zhang, Shen, and
  Zhang]{yu2022cold}
Yue Yu, Rongzhi Zhang, Ran Xu, Jieyu Zhang, Jiaming Shen, and Chao Zhang.
\newblock Cold-start data selection for few-shot language model fine-tuning: A
  prompt-based uncertainty propagation approach.
\newblock \emph{arXiv preprint arXiv:2209.06995}, 2022{\natexlab{b}}.

\bibitem[Guan et~al.(2018)Guan, Gulshan, Dai, and Hinton]{guan2018said}
Melody Guan, Varun Gulshan, Andrew Dai, and Geoffrey Hinton.
\newblock Who said what: Modeling individual labelers improves classification.
\newblock In \emph{Proceedings of the AAAI conference on artificial
  intelligence}, 2018.

\bibitem[Zhao et~al.(2022)Zhao, Zheng, Mukherjee, McCann, and
  Awadallah]{zhao2022admoe}
Yue Zhao, Guoqing Zheng, Subhabrata Mukherjee, Robert McCann, and Ahmed
  Awadallah.
\newblock Admoe: Anomaly detection with mixture-of-experts from noisy labels.
\newblock \emph{arXiv preprint arXiv:2208.11290}, 2022.

\bibitem[Breiman(1999)]{breiman1999prediction}
Leo Breiman.
\newblock Prediction games and arcing algorithms.
\newblock \emph{Neural computation}, 11\penalty0 (7):\penalty0 1493--1517,
  1999.

\bibitem[Friedman(2001)]{friedman2001greedy}
Jerome~H Friedman.
\newblock Greedy function approximation: a gradient boosting machine.
\newblock \emph{Annals of statistics}, pages 1189--1232, 2001.

\bibitem[Brukhim et~al.(2021)Brukhim, Hazan, Moran, Mukherjee, and
  Schapire]{brukhim2021multiclass}
Nataly Brukhim, Elad Hazan, Shay Moran, Indraneel Mukherjee, and Robert~E
  Schapire.
\newblock Multiclass boosting and the cost of weak learning.
\newblock \emph{Advances in Neural Information Processing Systems},
  34:\penalty0 3057--3067, 2021.

\bibitem[Cortes et~al.(2021{\natexlab{a}})Cortes, Mohri, Storcheus, and
  Suresh]{cortes2021boosting}
Corinna Cortes, Mehryar Mohri, Dmitry Storcheus, and Ananda~Theertha Suresh.
\newblock Boosting with multiple sources.
\newblock \emph{Advances in Neural Information Processing Systems},
  34:\penalty0 17373--17387, 2021{\natexlab{a}}.

\bibitem[Zhang et~al.(2022{\natexlab{e}})Zhang, Zhang, Courville, Bengio,
  Ravikumar, and Suggala]{zhang2022building}
Dinghuai Zhang, Hongyang Zhang, Aaron Courville, Yoshua Bengio, Pradeep
  Ravikumar, and Arun~Sai Suggala.
\newblock Building robust ensembles via margin boosting.
\newblock In \emph{International Conference on Machine Learning}, pages
  26669--26692. PMLR, 2022{\natexlab{e}}.

\bibitem[Huang et~al.(2018)Huang, Ash, Langford, and
  Schapire]{huang2018learning}
Furong Huang, Jordan Ash, John Langford, and Robert Schapire.
\newblock Learning deep resnet blocks sequentially using boosting theory.
\newblock In \emph{International Conference on Machine Learning}, pages
  2058--2067. PMLR, 2018.

\bibitem[Nitanda and Suzuki(2018)]{nitanda2018functional}
Atsushi Nitanda and Taiji Suzuki.
\newblock Functional gradient boosting based on residual network perception.
\newblock In \emph{International Conference on Machine Learning}, pages
  3819--3828. PMLR, 2018.

\bibitem[Suggala et~al.(2020)Suggala, Liu, and
  Ravikumar]{suggala2020generalized}
Arun Suggala, Bingbin Liu, and Pradeep Ravikumar.
\newblock Generalized boosting.
\newblock \emph{Advances in neural information processing systems},
  33:\penalty0 8787--8797, 2020.

\bibitem[Taherkhani et~al.(2020)Taherkhani, Cosma, and
  McGinnity]{taherkhani2020adaboost}
Aboozar Taherkhani, Georgina Cosma, and T~Martin McGinnity.
\newblock Adaboost-cnn: An adaptive boosting algorithm for convolutional neural
  networks to classify multi-class imbalanced datasets using transfer learning.
\newblock \emph{Neurocomputing}, 404:\penalty0 351--366, 2020.

\bibitem[Mansour et~al.(2008)Mansour, Mohri, and
  Rostamizadeh]{mansour2008domain}
Yishay Mansour, Mehryar Mohri, and Afshin Rostamizadeh.
\newblock Domain adaptation with multiple sources.
\newblock \emph{Advances in neural information processing systems}, 21, 2008.

\bibitem[Hoffman et~al.(2018)Hoffman, Mohri, and Zhang]{hoffman2018algorithms}
Judy Hoffman, Mehryar Mohri, and Ningshan Zhang.
\newblock Algorithms and theory for multiple-source adaptation.
\newblock \emph{Advances in Neural Information Processing Systems}, 31, 2018.

\bibitem[Cortes et~al.(2021{\natexlab{b}})Cortes, Mohri, Suresh, and
  Zhang]{cortes2021discriminative}
Corinna Cortes, Mehryar Mohri, Ananda~Theertha Suresh, and Ningshan Zhang.
\newblock A discriminative technique for multiple-source adaptation.
\newblock In \emph{International Conference on Machine Learning}, pages
  2132--2143. PMLR, 2021{\natexlab{b}}.

\bibitem[Zhang et~al.(2021{\natexlab{b}})Zhang, Mohri, and
  Hoffman]{zhang2021multiple}
Ningshan Zhang, Mehryar Mohri, and Judy Hoffman.
\newblock Multiple-source adaptation theory and algorithms.
\newblock \emph{Annals of Mathematics and Artificial Intelligence},
  89:\penalty0 237--270, 2021{\natexlab{b}}.

\bibitem[Yu et~al.(2022{\natexlab{c}})Yu, Xiong, Sun, Zhang, and
  Overwijk]{yu2022coco}
Yue Yu, Chenyan Xiong, Si~Sun, Chao Zhang, and Arnold Overwijk.
\newblock Coco-dr: Combating distribution shifts in zero-shot dense retrieval
  with contrastive and distributionally robust learning.
\newblock \emph{arXiv preprint arXiv:2210.15212}, 2022{\natexlab{c}}.

\bibitem[Zhang et~al.(2023)Zhang, Shen, Liu, Liu, Bendersky, Najork, and
  Zhang]{zhang2023not}
Rongzhi Zhang, Jiaming Shen, Tianqi Liu, Jialu Liu, Michael Bendersky, Marc
  Najork, and Chao Zhang.
\newblock Do not blindly imitate the teacher: Using perturbed loss for
  knowledge distillation.
\newblock \emph{arXiv preprint arXiv:2305.05010}, 2023.

\bibitem[Maas et~al.(2011)Maas, Daly, Pham, Huang, Ng, and
  Potts]{maas-EtAl:2011:ACL-HLT2011}
Andrew~L. Maas, Raymond~E. Daly, Peter~T. Pham, Dan Huang, Andrew~Y. Ng, and
  Christopher Potts.
\newblock Learning word vectors for sentiment analysis.
\newblock In \emph{Proceedings of the 49th Annual Meeting of the Association
  for Computational Linguistics: Human Language Technologies}, pages 142--150,
  Portland, Oregon, USA, June 2011.

\bibitem[Zhang et~al.(2015)Zhang, Zhao, and LeCun]{AGNews}
Xiang Zhang, Junbo Zhao, and Yann LeCun.
\newblock Character-level convolutional networks for text classification.
\newblock In \emph{NeurIPS}, pages 649--657, 2015.

\bibitem[{Alberto} et~al.(2015){Alberto}, {Lochter}, and {Almeida}]{youtube}
T.~C. {Alberto}, J.~V. {Lochter}, and T.~A. {Almeida}.
\newblock Tubespam: Comment spam filtering on youtube.
\newblock In \emph{ICMLA}, pages 138--143, 2015.

\bibitem[Li and Roth(2002)]{trec}
Xin Li and Dan Roth.
\newblock Learning question classifiers.
\newblock In \emph{COLING}, 2002.

\bibitem[Davis et~al.(2017)Davis, Grondin, Johnson, Sciaky, King, McMorran,
  Wiegers, Wiegers, and Mattingly]{davis2017comparative}
Allan~Peter Davis, Cynthia~J Grondin, Robin~J Johnson, Daniela Sciaky,
  Benjamin~L King, Roy McMorran, Jolene Wiegers, Thomas~C Wiegers, and
  Carolyn~J Mattingly.
\newblock The comparative toxicogenomics database: update 2017.
\newblock \emph{Nucleic acids research}, 45:\penalty0 D972--D978, 2017.

\bibitem[Hendrickx et~al.(2010)Hendrickx, Kim, Kozareva, Nakov, S{\'e}aghdha,
  Pad{\'o}, Pennacchiotti, Romano, and Szpakowicz]{hendrickx2010semeval}
Iris Hendrickx, Su~Nam Kim, Zornitsa Kozareva, Preslav Nakov, Diarmuid~{\'O}
  S{\'e}aghdha, Sebastian Pad{\'o}, Marco Pennacchiotti, Lorenza Romano, and
  Stan Szpakowicz.
\newblock Semeval-2010 task 8: Multi-way classification of semantic relations
  between pairs of nominals.
\newblock In \emph{Semeval}, pages 33--38, 2010.

\bibitem[Dawid and Skene(1979)]{dawid1979maximum}
Alexander~Philip Dawid and Allan~M Skene.
\newblock Maximum likelihood estimation of observer error-rates using the em
  algorithm.
\newblock \emph{Journal of the Royal Statistical Society: Series C (Applied
  Statistics)}, 28\penalty0 (1):\penalty0 20--28, 1979.

\bibitem[Li et~al.(2019)Li, Rubinstein, and Cohn]{li2019exploiting}
Yuan Li, Benjamin Rubinstein, and Trevor Cohn.
\newblock Exploiting worker correlation for label aggregation in crowdsourcing.
\newblock In \emph{International conference on machine learning}, pages
  3886--3895. PMLR, 2019.

\bibitem[Devlin et~al.(2018)Devlin, Chang, Lee, and Toutanova]{devlin2018bert}
Jacob Devlin, Ming-Wei Chang, Kenton Lee, and Kristina Toutanova.
\newblock Bert: Pre-training of deep bidirectional transformers for language
  understanding.
\newblock \emph{arXiv preprint arXiv:1810.04805}, 2018.

\bibitem[Loshchilov and Hutter(2017)]{loshchilov2018decoupled}
Ilya Loshchilov and Frank Hutter.
\newblock Decoupled weight decay regularization.
\newblock \emph{arXiv preprint arXiv:1711.05101}, 2017.

\end{thebibliography}
\clearpage
\appendix
\begin{table*}[!htb]
    \centering
    \caption{Detailed statistics of classification datasets.}
    \scalebox{0.75}{
    \begin{tabular}{ l l l c c c c c c c c c c}
        \toprule
         &
         \multicolumn{5}{c}{} &
         \multicolumn{4}{c}{\textbf{Avr. over LFs}}    &
         \multicolumn{1}{c}{\textbf{Train}}    &
         \multicolumn{1}{c}{\textbf{Valid}}    &
         \multicolumn{1}{c}{\textbf{Test}}   \\
         \cmidrule(lr){7-10}
         \cmidrule(lr){11-11}
         \cmidrule(lr){12-12}
         \cmidrule(lr){13-13}
           \textbf{Task} &\textbf{Domain} & \textbf{Dataset} & \textbf{\#Label} & \textbf{\#LF} & \textbf{Ovr. \%Coverage}& \textbf{\%Coverage} & \textbf{\%Overlap} & \textbf{\%Conflict} & \textbf{\%Accuracy} & \textbf{\#Data} & \textbf{\#Data} & \textbf{\#Data}   \\
         \midrule
     
    \multirow{2}{*}{Sentiment Classification}    

     & Movie & IMDb~\cite{maas-EtAl:2011:ACL-HLT2011}  & 2 & 5 & 87.58 & 23.60 & 11.60 & 4.50 & 69.88 & 20,000 & 2,500 & 2,500 \\      
     
     & Review & Yelp~\cite{AGNews}  & 2 & 8 & 82.78 & 18.34 & 13.58 & 4.94 & 73.05 & 30,400  & 3,800 & 3,800      \\\midrule
     
    \multirow{1}{*}{Spam Classification }    & Review & Youtube~\cite{youtube}  & 2 & 10 & 87.70 & 16.34 & 12.49 & 7.14 & 83.16 & 1,586 & 120 & 250 \\      
        
    \multirow{1}{*}{Topic Classification}    & News & AGNews~\cite{AGNews}  & 4 & 9 & 69.08 & 10.34 & 5.05 & 2.43 & 81.66 & 96,000 & 12,000 & 12,000  \\\midrule
    
    \multirow{1}{*}{Question Classification}    & Web Query & TREC~\cite{trec}  & 6 & 68 & 95.13 & 2.55 & 1.82 & 0.84 & 75.92 & 4,965 & 500 & 500  \\\midrule
    
     \multirow{2}{*}{Relation Classification}
     
     & Biomedical & CDR~\cite{davis2017comparative, ratner2017snorkel}  & 2 & 33 & 90.72 & 6.27 & 5.36 & 3.21 & 75.27 & 8,430 & 920 & 4,673  \\     
     & Web Text & SemEval~\cite{hendrickx2010semeval}  & 9 & 164 & 100.00 & 0.77 & 0.32 & 0.14 & 97.69 & 1,749 & 200 & 692   \\
    \bottomrule
    \end{tabular}
    }
    \label{tab:dataset_stats_class}
\end{table*}

\section{Data Statistics}
In Table~\ref{tab:dataset_stats_class}, we provide detailed information about datasets used in our experiments, encompassing statistics related to the labeling functions for each dataset.

\vspace{10pt}
\section{Implementation Details}
\label{app:impl}
The hyperparameters involved in this study includes the $k$ in Eq.~\eqref{eq:k} and $c_1$ in Eq.~\eqref{eq:c1}. The value for these two hyperparameters is listed in the Table~\ref{tab:hyperparameter}.

\begin{table}[!h]
\centering
\fontsize{8.2}{9}\selectfont 
\setlength{\tabcolsep}{0.3em}
\caption{Hyper-parameter Settings.}
\begin{tabular}{@{}lccccccc@{}}
\toprule
                     & IMDb & Yelp & Youtube &  AGNews & TREC & CDR & SemEval \\ \midrule
Max Length           & 150         & 256     & 256  & 512      & 64   & 128 &   128 \\
Learning Rate        & 1e-4        & 1e-4    & 1e-4 & 2e-5     & 5e-6 & 5e-6  & 3e-5    \\
Batch Size           & 64          & 32      & 32   & 16       & 64   & 32     & 32  \\
$k$    & 10 & 10 & 5 & 10 & 5 & 5 & 5 \\
$c_1$  &  4.0 & 4.0 & 8.0 & 4.0 & 10.0 & 10.0 & 10.0 \\ \bottomrule
\end{tabular}
\label{tab:hyperparameter}
\end{table}

\subsection{Computational Environment}
All of the experiments are conducted on \emph{CPU}: Intel(R) Core(TM) i7-5930K CPU @ 3.50GHz and \emph{GPU}: NVIDIA GeForce RTX A5000 GPUs with 24 GB memory using python 3.6 and Pytorch 1.10. 

\section{Proof}
Here, we provide evidence for Proposition \ref{prop:paradox}, which highlights that a good fit with weakly labeled data doesn't guarantee strong performance from their convex combination.
\begin{proof}
    Consider the weak sources $LF_1$ and $LF_2$, a dataset $X=\left\{x_1, x_2\right\}$, and the label space $Y = \{+1, -1\}$. The data samples get matched $LF_1$ and $LF_2$, respectively, and the weak labels given by the weak sources are
    \begin{equation}
        LF_1(x_1) = +1,\quad
        LF_2(x_2) = -1.
    \end{equation}
    From each weak source, we have base learner
    \begin{equation}
         f_1(x)=LF_1(x)=+1,\quad
        f_2(x)=LF_2(x)=-1.
    \end{equation}
    We have the convex combination of the two base learners 
    \begin{equation}\label{eq:convex}
    \begin{aligned}
        &\mathcal{L}\left(\frac{1}{2}\left(\mathcal{D}_1+\mathcal{D}_2\right), \alpha f_1+(1-\alpha) f_2\right)\\
        = & \frac{1}{2} \mathbb{I}(\alpha f_1(x_1)+(1-\alpha)f_2(x_1)) + 
        \frac{1}{2} \mathbb{I}(\alpha f_1(x_2)+(1-\alpha)f_2(x_2)) \\
        = & \frac{1}{2} \mathbb{I}(2 \alpha-1 \leq 0)+\frac{1}{2} \mathbb{I}(1-2 \alpha \leq 0) \geq \frac{1}{2}.
    \end{aligned}
    \end{equation}
    That is to say, even if the base learners perform well for the corresponding weak source such that
    \begin{equation}
    \begin{aligned}
        \mathcal{L}\left(\mathcal{D}_1, f_1\right)=\mathbb{I}\left(f_1\left(x_1\right) LF_1\left(x_1\right) \leq 0\right)=0,\\
        \mathcal{L}\left(\mathcal{D}_2, f_2\right)=\mathbb{I}\left(f_2\left(x_2\right) LF_2\left(x_2\right) \leq 0\right)=0,
    \end{aligned}
    \end{equation}
    the standard convex combination of them can still lead to a poor ensemble as $\mathcal{L}\left(\frac{1}{2}\left(\mathcal{D}_1+\mathcal{D}_2\right), \alpha f_1+(1-\alpha) f_2\right) \geq \frac{1}{2}$ shown in Eq.~\ref{eq:convex}.
\end{proof}
\label{app:proof}

\appendix

\end{document}